\newcommand{\specialcell}[2][c]{%
  \begin{tabular}[#1]{@{}c@{}}#2\end{tabular}}
\DeclareMathOperator{\DAO}{\textrm{DAO}}
\DeclareMathOperator{\LOF}{\textrm{LOF}}
\DeclareMathOperator{\ALDR}{\textrm{ALDR}}
\DeclareMathOperator{\MLE}{\textrm{MLE}}
\DeclareMathOperator{\LIDL}{\textrm{LIDL}}
\DeclareMathOperator{\lrd}{\textrm{lrd}}
\DeclareMathOperator{\slrd}{\textrm{slrd}}
\DeclareMathOperator{\SLOF}{\textrm{SLOF}}
\DeclareMathOperator{\knn}{\mathit{k}\textrm{NN}}
\DeclareMathOperator{\nn}{\textrm{NN}}
\DeclareMathOperator{\kdist}{\mathit{k}\textrm{\_dist}}
\DeclareMathOperator{\qq}{\mathbf{q}}
\DeclareMathOperator{\oo}{\mathbf{o}}
\DeclareMathOperator{\pp}{\mathbf{p}}
\DeclareMathOperator{\s}{\mathbf{s}}
\DeclareMathOperator{\reach}{\textrm{reach\_dist}}
\newcommand{\IDop}{\operatorname{ID}}
\newcommand{\ID}{\IDop}
\newcommand{\IDstar}{\IDop^{*}}
\newcommand{\IDesti}[1]{\widehat{\IDop^*_{#1}}}
\newcommand{\expauxOp}{A}
\newcommand{\expaux}[1]{\ensuremath{{\expauxOp}_{#1}}}
\newcommand{\expect}{\mathop{\mathbb{E}}}
\newcommand{\IntrDim}{\mathrm{IntrDim}}
\newcommand{\real}{\mathbb{R}}
\newcommand*\samethanks[1][\value{footnote}]{\footnotemark[#1]}
\def\BibTeX{{\rm B\kern-.05em{\sc i\kern-.025em b}\kern-.08em
    T\kern-.1667em\lower.7ex\hbox{E}\kern-.125emX}}
\begin{document}

\newcommand\relatedversion{}
\renewcommand\relatedversion{\thanks{The full version of the paper can be accessed at \protect\url{https://arxiv.org/abs/1902.09310}}} % Replace URL with link to full paper or comment out this line

%\setcounter{chapter}{2} % If you are doing your chapter as chapter one,
%\setcounter{section}{3} % comment these two lines out.

%\title{\Large Dimensionality-Aware Outlier Detection\relatedversion}
\title{\Large Dimensionality-Aware Outlier Detection: \\
Theoretical and Experimental Analysis}
\author{
Alastair Anderberg
\thanks{The University of Newcastle, Callaghan, NSW, Australia. \texttt{anderberg.alastair@gmail.com}}
\and
James Bailey
\thanks{The University of Melbourne, Parkville, VIC, Australia. \texttt{baileyj@unimelb.edu.au}}
\and
Ricardo J. G. B. Campello
\thanks{University of Southern Denmark, Odense, Denmark. \texttt{\{campello, oli, zimek\}@imada.sdu.dk}}
\and
Michael E. Houle
\thanks{New Jersey Institute of Technology, Newark, NJ, USA. \texttt{michael.houle@njit.edu}}
\samethanks[2]
\and
Henrique O. Marques
\samethanks[3]
\and
Milo\v{s} Radovanovi\'{c}
\thanks{University of Novi Sad, Serbia. %\\ 
\texttt{radacha@dmi.uns.ac.rs}}
\and
Arthur Zimek
\samethanks[3]
}

%\IEEEauthorblockN{Alastair Anderberg$^{1}$}
%\and
%\IEEEauthorblockN{James Bailey$^{2}$}
%\and
%\IEEEauthorblockN{Ricardo J. G. B. Campello$^{3}$}
%\linebreakand
%\IEEEauthorblockN{Michael E. Houle$^{4,2}$}
%\and
%\IEEEauthorblockN{Henrique O. Marques$^{3}$}
%\and
%\IEEEauthorblockN{Milo\v{s} Radovanovi\'{c}$^{5}$}
%\and
%\IEEEauthorblockN{Arthur Zimek$^{3}$}
%\linebreakand
%\IEEEauthorblockA{$^1$\textit{School of Mathematical and Physical Sciences} \\
%\textit{The University of Newcastle}\\
%Callaghan, NSW, Australia \\
%anderberg.alastair@gmail.com}
%\and
%\IEEEauthorblockA{$^2$\textit{Faculty of Engineering and Information Technology} \\
%\textit{The University of Melbourne}\\
%Parkville, VIC, Australia \\
%\{baileyj, mhoule\}@unimelb.edu.au}
%\linebreakand
%\IEEEauthorblockA{$^3$\textit{Dept.\ of Mathematics and Computer Science} \\
%\textit{University of Southern Denmark}\\
%Odense, Denmark \\
%\{campello, oli, zimek\}@imada.sdu.dk}
%\and
%\IEEEauthorblockA{$^4$\textit{Ying Wu College of Computing} \\
%\textit{New Jersey Institute of Technology}\\
%Newark, NJ, USA \\
%michael.houle@njit.edu}
%\and
%\IEEEauthorblockA{$^5$\textit{Faculty of Sciences} \\
%\textit{University of Novi Sad}\\
%Novi Sad, Serbia \\
%radacha@dmi.uns.ac.rs}

\date{}

\maketitle

% Copyright Statement
% When submitting your final paper to a SIAM proceedings, it is requested that you include
% the appropriate copyright in the footer of the paper.  The copyright added should be
% consistent with the copyright selected on the copyright form submitted with the paper.
% Please note that "20XX" should be changed to the year of the meeting.

% Default Copyright Statement
%\fancyfoot[R]{\scriptsize{Copyright \textcopyright\ 20XX by SIAM\\
%Unauthorized reproduction of this article is prohibited}}

% Depending on which copyright you agree to when you sign the copyright form, the copyright
% can be changed to one of the following after commenting out the default copyright statement
% above.

%\fancyfoot[R]{\scriptsize{Copyright \textcopyright\ 20XX\\
%Copyright for this paper is retained by authors}}

%\fancyfoot[R]{\scriptsize{Copyright \textcopyright\ 20XX\\
%Copyright retained by principal author's organization}}

%\pagenumbering{arabic}
%\setcounter{page}{1}%Leave this line commented out.

%===========================================
% \bigskip\hrule\bigskip %====================
%===========================================

\begin{abstract}
\small\baselineskip=9pt
We present a nonparametric method for outlier detection that takes full account of local variations in intrinsic dimensionality within the dataset. Using the theory of Local Intrinsic Dimensionality (LID), our `dimensionality-aware' outlier detection method, $\DAO$, is derived as an estimator of an asymptotic local expected density ratio involving the query point and a close neighbor drawn at random. The dimensionality-aware behavior of $\DAO$ is due to its use of local estimation of LID values in a theoretically-justified way.
Through comprehensive experimentation on more than 800 synthetic and real datasets, we show that $\DAO$ significantly outperforms three popular and important benchmark outlier detection methods: Local Outlier Factor (LOF), Simplified LOF, and $\knn$.
\end{abstract}

%\begin{keywords}
%Outlier Detection, Intrinsic Dimensionality
%\end{keywords}

\section{Introduction}
Outlier detection, one of the most fundamental tasks in data mining, aims to identify observations that deviate from the general distribution of the data. Such observations often deserve special attention as they may reveal phenomena of extreme importance, such as 
%traffic accidents~\cite{djenouri2018}, 
network intrusions~\cite{ahmad2021}, 
%credit card fraud~\cite{adewumi2017}, 
sensor failures~\cite{ramotsoela2018}, or disease~\cite{alaverdyan2020}.

The study of outliers has its origins in the field of statistics. There exist dozens of parametric statistical tests 
%based on a sound theory 
that can be applied to detect outliers~\cite{Bar78,Haw80}. Although these tests have shown good performance when the underlying theoretical assumptions are met, in real-world applications these assumptions usually do not hold~\cite{marques2023}. This well-known limitation of the parametric approach has triggered research on unsupervised, nonparametric methods for outlier detection, as far back as the seminal work of Knorr and Ng in 1997~\cite{KnoNg97a}. Nonparametric approaches make no explicit assumptions on the nature of the underlying data distribution, but can estimate some of its local characteristics, such as probability density at a point of interest. Although nonparametric methods are usually more suitable for real-world applications due to their flexibility, generally speaking they lack the theoretical justification that parametric approaches have enjoyed~\cite{ZimekF18}.

The estimates computed by non-parametric methods for outlier detection usually rely on the distances from the test point to its nearest neighbors.
As such, these methods are subject to the well-known `curse of dimensionality' phenomenon, by which the quality of distance information diminishes as the dimensionality of the data increases~\cite{ZimSchKri12}, leading to such observable effects as the concentration of distance values about their %mean~\cite{BeyGolRamSha99,Pes00,FraWerVer07}. 
mean~\cite{BeyGolRamSha99}. 
Contrary to what is commonly assumed, however, most of the challenges associated with high-dimensional data analysis do not depend directly on the representational data dimension (number of attributes); rather, they are better explained by the notion of `intrinsic dimensionality' (ID), which can be understood intuitively as the number of features required to explain the distributional characteristics observed within the data, or the dimension of the surface (or manifold or subspace) achieving the best fit to the data. In practice, for many models of ID, the estimated number of explanatory features or surface dimensions is often much smaller than the dimension of the embedding data space.

%The notion of ID comes from the fact that most of the variation in real data may be explained by fewer features than the dimensionality of the embedding data space. The dimension of the surface which best approximates the data is what is called ID. 

Typically, the intrinsic dimension is not uniform across the whole dataset: applying a model of ID to subregions of the data (such as the neighborhood of a query point) almost always produces different results for each.
%However, the ID is typically not uniform across the whole dataset. On the contrary, the ID is often varying considerably for different regions of a dataset and has therefore been characterized locally, providing for each region (e.g., around a query point) of the dataset a different Local Intrinsic Dimensionality (LID)~\cite{Hou13,Hou17a}. 
%Variation of ID within the dataset increases even further the complexity of the data. This includes not only the range of the variability but also the complexity of the intrinsic dimension profile within a dataset that often results in a performance loss for data mining and indexing methods~\cite{KarRuh02,HouKriKroSchetal10,ZimSchKri12,MaLiWanErfetal18,AumCec21,AmsalegBBEFHRN21}.
The added complexity associated with variation of local ID has the potential to increase the difficulty of data analysis, thereby resulting in a performance loss for many similarity-based data mining and indexing %methods~\cite{KarRuh02,HouKriKroSchetal10,ZimSchKri12,MaLiWanErfetal18,AumCec21,AmsalegBBEFHRN21}. 
methods~\cite{KarRuh02,HouKriKroSchetal10,ZimSchKri12,AumCec21,AmsalegBBEFHRN21}. 
For this reason, there has been considerable recent attention to `local' models of ID as an alternative to the classic `global' models that seek to characterize the complexity of the entire dataset. 

In this paper, we focus on a theoretical model of local complexity, the Local Intrinsic Dimensionality (LID)~\cite{Hou13,Hou17a}, 
which was originally motivated by the need to characterize the interrelationships between probability and distance within neighborhoods.
%, in terms of the number of `effective' features (the `intrinsic' dimension) that would best explain the observations. 
Unlike other measures of ID which are formulated as heuristics for direct use on discrete datasets, LID is a theoretical quantity for which effective estimators have been %developed~\cite{AmsalegCFGHKN18,amsaleg2022intrinsic,Hil75,LevBic04}.
developed~\cite{AmsalegCFGHKN18,amsaleg2022intrinsic,LevBic04}.
LID has had many recent successes in data analysis, both theoretical and practical, in areas such as search 
and indexing~\cite{AumCec21,CasEngHouKroetal17},
AI and data mining~\cite{BaiHouMa22,RomCheNguBaietal16}, and deep %learning~\cite{AmsalegBBEFHRN21,LiQiZhaetal19,MaLiWanErfetal18,MaWanHouZhoetal18}. 
learning~\cite{AmsalegBBEFHRN21,MaWanHouZhoetal18}. 
There is empirical evidence to suggest that outlierness is correlated in practice with high local intrinsic dimensionality~\cite{HouSchZim18}.

The main contribution of our paper is the first known nonparametric method for \emph{dimensionality-aware outlier detection} ($\DAO$), one whose formulation we derive as an estimator of an \emph{asymptotic local expected density ratio} ($\ALDR$), using the theory of LID. The dimensionality-aware behavior of $\DAO$ is due to its use of local estimation of LID values in a theoretically-justified way.
%In contrast to the traditional outlier detection methods, we propose a model for \emph{Dimensionality-Aware Outlier Detection} ($\DAO$). 
Our proposed model will be seen to resemble the classic LOF outlier detection criterion~\cite{BreKriNgSan00}, and (even more closely) its very popular simplified variant, SLOF~\cite{SchZimKri14}. However, like all known nonparametric outlier detection criteria, LOF and SLOF both differ from our proposed model in that they rely solely on distance-based criteria for density estimation, without taking local dimensionality explicitly into account. 

Through our theoretical model we gain an understanding of the susceptibility of traditional outlier detection methods to variation in local ID within the dataset. As a second main contribution, we verify this understanding through a comprehensive empirical study (involving a total of more than 800 datasets) of the performance of $\DAO$ versus three of the most popular and effective nonparametric outlier detection methods known to date:  $\LOF$, $\SLOF$, and 
%$\knn$~\cite{AngPiz02,RamRasShi00}. 
$\knn$~\cite{RamRasShi00}. 
In particular, we present visualizations of outlier detection performance for 393 real datasets, that empirically confirm the tendency of $\DAO$ to outperform its dimensionality-unaware competitors, particularly when the variation of LID values within a dataset is high (as indicated by measures of high dispersion or low autocorrelation).

%and different from most approaches in the literature that aim to mitigate problems associated with deterioration of the quality of the distances, we can provide a comprehensive and theoretically well-justified solution to handle the problem.

% In summary, we make the following contributions in this paper:
% \begin{itemize}
%     \item 
% \end{itemize}

In this paper, we provide the full theoretical and experimental details for research first published at the SIAM Conference on Data Mining (SDM24)~\cite{DAO-SDM24}. In Section~\ref{sec:related_work} we discuss related work. In Section~\ref{sec:background} we provide the reader with the relevant background for the LOF and SLOF outlier detection methods, and the theory of local intrinsic dimensionality. In Section~\ref{sec:dao} we derive and theoretically justify our proposed dimensionality-aware outlierness model, $\DAO$. We present our experimental setup in Section~\ref{sec:eval}, and discuss the results in Section~\ref{sec:results}. Finally, we present concluding remarks in Section~\ref{sec:conclusions}.

\section{Related Work}
\label{sec:related_work}

Non-parametric approaches for outlier detection~\cite{CamZimSanCametal16} either explicitly or implicitly aim to assess the density in the vicinity of a query point, such that points with the lowest densities are reported as the strongest outlier candidates.
The assessment of density can be direct, or based on distances, or on the ratio of one density with respect to another.
The distance-based DB-outlier method~\cite{KnoNg97a} estimates density in the vicinity of a query by counting the number of data points contained in a neighborhood with predefined radius.
Conversely, the $k$-nearest-neighbor algorithm
%($\knn$)~\cite{AngPiz02,RamRasShi00} 
($\knn$)~\cite{RamRasShi00} 
measures the radius needed so as to capture a fixed number of points, $k$.
%within the neighborhood of the point by computing the distance of a point $\qq$ to its $k^{th}$ nearest neighbor, i.e., $\knn(\qq) = \kdist(\qq)$.
Local outlier detection methods based on density ratios, such as LOF~\cite{BreKriNgSan00}, %and SLOF~\cite{SchZimKri14} 
identify outliers to be those points having local densities that are small relative to those of their nearest neighbors.

%In contrast, global outlier detection methods compare and rank the density estimate of a query point with respect to the density estimates of all other data points --- here, the entire dataset is used as a reference set for the determination of outliers. As with local outlier detection, global outlier detection can be distance-based or density based. 
%Density-based methods also estimate the density around points based on the $k$ nearest neighbor distances. However, local density-based methods such as LOF \cite{BreKriNgSan00} and SLOF \cite{SchZimKri14} search for local outliers, i.e., points that are located in an area of relative low density when compared to the density around the points in their neighborhood.

Many variations of the aforementioned outlier models have been proposed over the past decades. Some rely on nonstandard notions of neighborhood, such as COF (connectivity-based outlier factor)~\cite{TanCheFuChe02}, INFLO (Influenced Outlierness)~\cite{JinTunHanWan06}, and others based on %reverse nearest neighbors~\cite{HauKaeFra04,RadNanIva14}.
reverse nearest neighbors~\cite{RadNanIva14}.
Others estimate the local density in different ways, such as LDF (Local Density Factor)~\cite{LatLazPok07}, 
LOCI (Local Outlier Integral)~\cite{PapKitGibFal03}, 
and 
KDEOS (Kernel Density Estimation Outlier
Score)~\cite{SchZimKri14a}. 
Yet other variations derive an outlier score from a comparison of a local model for the query point to local models of other data points; these include the local distance-based outlier detection (LDOF) approach~\cite{ZhaHutJin09}, 
probabilistic modeling of local outlier scores (LoOP)~\cite{KriKroSchZim09a}, or meta-modeling of outlierness~\cite{KriKroSchZim11}.
A somewhat different approach is angle-based outlier detection (ABOD)~\cite{KriSchZim08}, which bases the degree of outlierness of a query point on the variance of the angles formed by it and other pairs of points. %However, this method also takes the distances into account as weights.

Despite the many variants of the fundamental techniques of non-parametric outlier detection that have appeared over the past quarter century, two classic methods in particular, $\LOF$ and $\knn$, have repeatedly been confirmed as top performers or recommended baselines in larger comparative studies involving local anomaly detection~\cite{CamZimSanCametal16,goldstein2016,HanHuHuaJiaZha22}. None of these methods, however, take into account the possibility of variation in local intrinsic dimensionality within the dataset.

\section{Background}\label{sec:background}

\subsection{Local Outlier Factor.}
The term `local outlier' refers to an observation that is sufficiently different from observations in its vicinity. 
%This can be characterized using neighborhood information. The neighborhood of a point $q$ in $\mathbb{R}^d$ can be considered as the ball of a given radius $r$ centered on $q$, $B(q,r)$. A dataset can be considered a finite sample from a continuous underlying distribution, where we expect a larger number of observations to be located in areas of higher probability mass in the underlying distribution, while the outliers are located in areas of lower probability mass. 
%
Typical density-based outlier detection methods consider a point $\qq$ as a local outlier if a given neighborhood of $\qq$ is less dense than neighborhoods centered at $\qq$'s own neighbors, according to some criterion.
Following this principle, Local Outlier Factor (LOF)~\cite{BreKriNgSan00} contrasts the local density at $\qq$ with the local densities at the members of its $k$-nearest neighbor set, $\nn_k(\qq)$:
\begin{equation*}
\label{lof}
    \LOF_k(\qq)
    \:\:\triangleq\:\:
    \frac{1}{k}\sum_{\oo\in \nn_k(\qq)}\frac{\lrd_k(\oo)}{\lrd_k(\qq)},
\end{equation*}
where the local reachability density ($\lrd$) at point $\pp$ is defined in terms of the inverse of an average of so-called `reachability distances' taken from the $k$-nearest neighbors of $\pp$:
%Since density is equal to $mass/volume$, in our $\knn$-based estimate, the mass is the number of points in the $k$NN and the volume is a measure indicative of the distance to each point. It is given as 
\begin{equation*}
\lrd_k(\pp)    
\:\:\triangleq\:\:
\left(\frac{\sum_{\s\in \nn_k(\pp)}\reach_k(\pp\,{\leftarrow}\s)}{k}\right)^{-1}.
\label{eq:lrdlof}
\end{equation*}
Such a distance is defined as the maximum of the neighbor's own $k$-NN distance, $\kdist(\s)$, and its distance to $\pp$, $d(\pp,\s)$:
\begin{equation*}\label{reach-dist}
    \reach_k(\pp\,{\leftarrow}\s) = \max\{\kdist(\s), d(\pp,\s) \}
    \, .
\end{equation*}
The LOF reachability distance can be regarded as using the distance between $\pp$ and its neighbor $\s$ by default, except when $\s$ is closer to $\pp$ than it is to its own $k$-th nearest neighbor. 

Although the local reachability density of LOF aggregates the contribution of many neighbors to produce a smoother and more stable estimate, it requires multiple levels of neighborhood computation (for each neighbor $\oo$ of $\qq$, the $k$-NN distance of each of the neighbors of $\oo$).  
The Simplified LOF (SLOF) variant~\cite{SchZimKri14} avoids one level of neighborhood computation by using the inverse $k$-NN distance in place of the local reachability density:
\begin{equation*}
    \label{eqn:SLOF1}
    \SLOF_{k}(\qq)
    \:\:\triangleq\:\:
    \frac{1}{k}\sum_{\oo\in \nn_k(\qq)}\frac{\slrd_k(\oo)}{\slrd_k(\qq)}
    \, ,
\end{equation*}
where
\begin{equation*}\label{eq:slrd}
    \slrd_k(\pp)
    \:\:\triangleq\:\:
    \frac{1}{\kdist(\pp)}
    \, .
\end{equation*}

The density of the neighborhood $\nn_k(\qq)$ can be regarded as the ratio between the mass (the number of points $k$) and the volume of the ball with radius $\kdist(\qq)$. In the Euclidean setting, this ratio is proportional to $\nicefrac{k}{(\kdist(\qq))^m}$, where $m$ is the dimension of the space. $\slrd_k(\qq)$ can thus be interpreted as a proportional density estimate that treats $k$ as a constant, and ignores the dimension of the ambient space, $m$.

\subsection{Local Intrinsic Dimensionality.}

%The Local Intrinsic Dimensionality (LID) model~\cite{Hou13,Hou17a} is a model of complexity originally motivated to characterize the interrelationships between probability and distance in neighborhoods, in terms of the number of `effective' features (the `intrinsic' dimension) that would best explain the observations. LID has had many recent successes, both theoretical and practical, in such application areas as search and indexing~\cite{AumCec21,CasEngHouKroetal17}, AI and data mining~\cite{BaiHouMa22,RomCheNguBaietal16}, and deep learning~\cite{AmsalegBBEFHRN21,LiQiZhaetal19,MaLiWanErfetal18,MaWanHouZhoetal18}. There is empirical evidence to suggest that outlierness is correlated in practice with high local intrinsic dimensionality~\cite{HouSchZim18}.

The Local Intrinsic Dimensionality (LID) %model~\cite{Hou13,Hou17a} 
model~\cite{Hou17a} 
can be regarded as a continuous extension of the expansion dimension due to Karger and %Ruhl~\cite{HouKasNet12,KarRuh02}, 
Ruhl~\cite{KarRuh02}, 
which derives
a measure of dimensionality from the relationship between volume and radius in an expanding ball centered at a point of interest in a Euclidean data domain. Given two measurements of radii ($r_1$ and $r_2$) and volume ($V_1$ and $V_2$), the dimension $m$ can be obtained from the ratios of the measurements:
\[
\frac{V_2}{V_1}
=
\left(
\frac{r_2}{r_1}
\right)^m
\:\:
\Longrightarrow
\:\:\:
m
=
\frac{
\ln(\nicefrac{V_2}{V_1})
}
{
\ln(\nicefrac{r_2}{r_1})
}
\, .
\]

Early expansion models are discrete, in that they estimate volume by the number of data points captured by the ball. The LID model, by contrast, allows data to be viewed as samples drawn from an underlying distribution, with the volume of a ball represented by the probability measure associated with its interior. For balls centered at a common reference point, the probability measure can be expressed as a function $F(r)$ of the radius $r$, and as such $F$ can be viewed as the cumulative distribution function (CDF) of the distribution of distances to samples drawn from the underlying global distribution. However, it should be noted that the LID model has been developed to characterize the complexity of growth functions in general:
%, and is not restricted to any distributional interpretation: 
the variable $r$ need not be a Euclidean distance, and the function $F$ need not satisfy the conditions of a CDF.

\begin{Definition}[\!\!\cite{Hou17a}]
\label{D:IntrDim}
Let $F$ be a real-valued function that is non-zero 
over some open interval containing $r\in\real$, $r\neq 0$. 
The {\em intrinsic dimensionality
of $F$ at $r$} is defined as follows, whenever the limit exists:
%\vspace{-2mm}
\begin{eqnarray*}
\IntrDim_{F}(r)
&\triangleq&
\lim_{\epsilon\to 0}
\frac{\ln \left( F((1{+}\epsilon)r) / F(r)\right)}
{\ln (1{+}\epsilon)}
\, .
%\vspace{-2mm}
%\]
\end{eqnarray*}
\end{Definition}

When $F$ is `smooth' (continuously differentiable) in the vicinity of $r$,
its intrinsic dimensionality has a closed-form expression: 

\begin{theorem}[\!\!\cite{Hou17a}]
\label{T:fundamental}
Let $F$ be a real-valued function that is non-zero 
over some open interval containing $r\in\real$, $r\neq 0$. 
If $F$ is continuously differentiable at $r$, then
\[
\ID_F(r)
\:\:
\triangleq
\:\:
\frac{r\cdot F'(r)}{F(r)}
\:\:
=
\:\:
\IntrDim_{F}(r)
\, .
\]
\end{theorem}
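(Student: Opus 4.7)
The plan is to recognize the defining limit
\[
\IntrDim_F(r) \;=\; \lim_{\epsilon\to 0} \frac{\ln\bigl(F((1{+}\epsilon)r)/F(r)\bigr)}{\ln(1{+}\epsilon)}
\]
as an indeterminate form of type $0/0$ and dispatch it via L'Hôpital's rule (or, equivalently, via a first-order Taylor expansion). First I would justify that the hypotheses place us in the $0/0$ setting: continuous differentiability of $F$ at $r$ implies continuity, so $F((1{+}\epsilon)r)\to F(r)$ as $\epsilon\to 0$; hence the numerator tends to $\ln 1 = 0$, and the denominator $\ln(1{+}\epsilon)$ also tends to $0$. The assumption that $F$ is nonzero on an open interval containing $r$ ensures that all logarithms are well defined for $\epsilon$ of sufficiently small absolute value, and in particular $F(r)\neq 0$.

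Next I would differentiate numerator and denominator with respect to $\epsilon$. By the chain rule,
\[
\frac{d}{d\epsilon}\ln\bigl(F((1{+}\epsilon)r)/F(r)\bigr) \;=\; \frac{r\,F'((1{+}\epsilon)r)}{F((1{+}\epsilon)r)},
\]
while $\frac{d}{d\epsilon}\ln(1{+}\epsilon)=1/(1{+}\epsilon)$. Both derivatives are continuous at $\epsilon=0$: the denominator's derivative because it is elementary, and the numerator's because $F$ and $F'$ are continuous at $r$ with $F(r)\neq 0$.

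Applying L'Hôpital's rule therefore yields
\[
\IntrDim_F(r) \;=\; \lim_{\epsilon\to 0}\frac{r\,F'((1{+}\epsilon)r)/F((1{+}\epsilon)r)}{1/(1{+}\epsilon)} \;=\; \frac{r\,F'(r)}{F(r)},
\]
which is exactly the claimed closed form $\ID_F(r)$. In particular this proves that the limit in Definition~\ref{D:IntrDim} actually exists under the smoothness hypothesis.

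There is no real obstacle; the only points requiring a small amount of care are (i) verifying the $0/0$ form so that L'Hôpital is legitimately applicable, and (ii) confirming that continuous differentiability, rather than mere differentiability, is what lets us pass the limit through $F'$ at $(1{+}\epsilon)r$. If one prefers to avoid L'Hôpital, an equivalent argument writes $F((1{+}\epsilon)r) = F(r)+\epsilon\,r\,F'(r)+o(\epsilon)$ and $\ln(1{+}\epsilon)=\epsilon+o(\epsilon)$, dividing the two expansions after taking $\ln(1+\epsilon\,rF'(r)/F(r)+o(\epsilon))=\epsilon\,rF'(r)/F(r)+o(\epsilon)$ and letting $\epsilon\to 0$.
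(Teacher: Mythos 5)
Your proof is correct, and it is essentially the standard argument for this result (the paper only cites it from \cite{Hou17a}, where it is likewise obtained by applying L'H\^opital's rule to the defining limit after verifying the $0/0$ form). The only point you might state a touch more explicitly is that continuity and non-vanishing of $F$ on the open interval force $F$ to have constant sign there, so the ratio $F((1{+}\epsilon)r)/F(r)$ is positive and the logarithm in the numerator is well defined; everything else, including the remark that the Taylor-expansion variant avoids L'H\^opital, is sound.
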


%Let $\mathbf{x}$ be a location of interest within a data domain $\mathcal{S}$ for which the distance measure $d$ has been defined.
%To any generated sample $\mathbf{y}\in\mathcal{D}$ 
%we can associate the distance $r=d(\mathbf{x},\mathbf{y})$; in this 
%way, the global distribution that produces samples $\mathbf{y}$ can be said to induce
%a local distance distribution with CDF $F$ with respect to $\mathbf{x}$.

In characterizing the local intrinsic dimensionality at a query location, we
are interested in the limit of $\ID_F(r)$ as the distance $r$ tends to $0$, which we denote by 
\[
\IDstar_F
\:\:\triangleq\:\:
\lim_{r\to 0}\ID_F(r)
\, .
\]
Henceforth, when we refer to the local intrinsic dimensionality of a function $F$, or of a reference location whose induced distance distribution has $F$ as its CDF, we will take `LID' to mean the quantity $\IDstar_{F}$.

To gain a better intuitive understanding of LID and how it can be interpreted, consider the ideal case in which points in the neighborhood of $\mathbf{x}$ are distributed uniformly within a submanifold in $\mathcal{D}$. Here, in this ideal setting, the dimension of the submanifold would equal $\IDstar_F$. In general, however, data distributions are not ideal, the manifold model of data does not perfectly apply, and $\IDstar_F$ is not necessarily an integer. In practice, estimation of the LID at $\mathbf{x}$ would give an indication of the dimension of the submanifold containing $\mathbf{x}$ that best fits the distribution.

\subsection{LID Representation Theorem.}

The intrinsic dimensionality function $\ID_{F}$ is known to fully characterize its associated function $F$. The LID Representation Theorem~\cite{Hou17a}, which we state below, is analogous to a foundational result from the statistical theory of extreme values (EVT), the Karamata Representation Theorem for regularly varying functions. For more information on EVT, the Karamata representation, and how the LID model relates to it, we refer the reader to~\cite{Coles01,Hou17a,Hou17b}. 

\begin{theorem}[LID Representation~\cite{Hou17a}]
\label{T:id-rep}
Let $F:\real\to\real$ be a real-valued function,
and assume that $\IDstar_{F}$ exists.
Let $r$ and $w$ be values
for which $r/w$ and $F(r)/F(w)$ are both positive.
If $F$ is non-zero and continuously differentiable
everywhere in the interval
$[\min\{r,w\},\max\{r,w\}]$, then
\begin{eqnarray*}
\frac{
F(r)
}{
F(w)
}
& = &
%\, = \,
\left(\frac{r}{w}\right)^{\IDstar_{F}}
\cdot
\expaux{F}(r,w),
\end{eqnarray*}
where
\begin{eqnarray*}
\expaux{F}(r,w)
& \triangleq &
      \exp\left(
         \int_{r}^{w}
             \frac{\IDstar_{F}-\ID_{F}(t)}{t}
         \,\mathrm{d}t
      \right)
,
\end{eqnarray*}
whenever the integral exists.
\end{theorem}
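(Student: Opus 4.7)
The plan is to prove the representation by direct evaluation of the integral that appears inside the exponential, using the closed form $\ID_F(t) = t F'(t)/F(t)$ granted by Theorem~\ref{T:fundamental}. Because $F$ is non-zero and continuously differentiable on the closed interval $[\min\{r,w\}, \max\{r,w\}]$, the function $\ID_F(t)$ is continuous there, and the integrand $(\IDstar_F - \ID_F(t))/t$ is continuous (note that the interval excludes $0$, since $r/w$ is assumed positive), so the integral defining $\expaux{F}(r,w)$ unambiguously exists.

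First, I would split the integral in the exponent of $\expaux{F}(r,w)$ into two pieces:
\begin{equation*}
\int_{r}^{w} \frac{\IDstar_F - \ID_F(t)}{t}\, \mathrm{d}t
= \IDstar_F \int_{r}^{w} \frac{\mathrm{d}t}{t} - \int_{r}^{w} \frac{\ID_F(t)}{t}\, \mathrm{d}t.
\end{equation*}
The first piece evaluates to $\IDstar_F \ln(w/r)$. For the second piece, I would substitute the closed form $\ID_F(t)/t = F'(t)/F(t)$, which is the logarithmic derivative of $F$; since $F$ is continuously differentiable and non-zero on the interval, the fundamental theorem of calculus yields
\begin{equation*}
\int_{r}^{w} \frac{F'(t)}{F(t)}\, \mathrm{d}t = \ln F(w) - \ln F(r) = \ln\!\left(\frac{F(w)}{F(r)}\right).
\end{equation*}

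Combining these two evaluations gives
\begin{equation*}
\int_{r}^{w} \frac{\IDstar_F - \ID_F(t)}{t}\, \mathrm{d}t
= \ln\!\left( \left(\frac{w}{r}\right)^{\IDstar_F} \cdot \frac{F(r)}{F(w)} \right),
\end{equation*}
so that $\expaux{F}(r,w) = (w/r)^{\IDstar_F} \cdot F(r)/F(w)$. Multiplying both sides by $(r/w)^{\IDstar_F}$ and rearranging yields the claimed identity $F(r)/F(w) = (r/w)^{\IDstar_F} \cdot \expaux{F}(r,w)$.

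This proof is essentially a one-line computation, so there is no serious obstacle. The only care needed is a bookkeeping check: when $r > w$, the integral $\int_r^w$ is the negative of $\int_w^r$, but both sides of the final identity transform consistently under swapping $r$ and $w$, so the argument is direction-agnostic. One should also note that $\IDstar_F$ enters the proof only as a constant, meaning the representation holds at any fixed pair $(r,w)$ satisfying the hypotheses, independently of whether the limit defining $\IDstar_F$ is actually attained as $r \to 0$.
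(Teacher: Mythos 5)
Your computation is correct, and it matches the standard derivation of this result: the paper states the theorem without proof (citing Houle 2017), and the argument there is the same direct integration of the logarithmic-derivative identity $\ID_F(t)/t = F'(t)/F(t)$ followed by exponentiation. One cosmetic remark: strictly, $\int_{r}^{w} F'(t)/F(t)\,\mathrm{d}t = \ln|F(w)| - \ln|F(r)|$, but since $F$ is continuous and non-zero on the interval it has constant sign there, so this equals $\ln\left(F(w)/F(r)\right)$ under the hypothesis that the ratio is positive, and your conclusion stands.
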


The convergence characteristics of $F$ to its asymptotic form are expressed by the auxiliary factor $\expaux{F}(r,w)$, which is related to the slowly-varying functions studied in EVT~\cite{Coles01}. In~\cite{Hou17a}, $\expaux{F}(r,w)$ is shown to tend to 1 as $r,w\to 0$, provided that the log-ratio $\ln(\nicefrac{r}{w})$ remains bounded. With these restrictions, the auxiliary factor disappears from the statement of Theorem~\ref{T:id-rep} when the limit of both sides is taken. 

When the relationship between $r$ and $w$ is made explicit through a parameterization $w=\alpha(u)$ and $r=\beta(u)$, an asymptotic version of the LID Representation Theorem can be formulated without reference to the auxiliary factor $\expaux{F}$.

\begin{theorem}[\!\!\cite{Houle20}]
\label{T:densityratio}
Let $F:\real_{\geq 0}\to \real_{\geq 0}$ be a non-decreasing function,
and assume that $\IDstar_{F}$ exists and is positive.
Let $\alpha,\beta:\real_{\geq 0}\to\real_{\geq 0}$
be functions such that $\alpha(0)=\beta(0)=0$,
and for some value of $c>0$,
their restrictions to the interval
$[0,c)$ are continuously differentiable and
strictly
monotonically increasing.
Then
\begin{eqnarray}
~~~~~~
\lim_{u\to 0}
\frac{F(\beta(u))}{F(\alpha(u))}
& \: = \: &
\lim_{u\to 0}
\left(
\frac{\beta(u)}{\alpha(u)}
\right)^{\IDstar_{F}}
\:\: = \:\:
\lambda^{\IDstar_{F}}
\label{E:densityratio}
\end{eqnarray}
whenever the limit $\lambda=\lim_{u\to 0} \frac{\beta(u)}{\alpha(u)}$
exists.
If instead $\lambda$ diverges to $+\infty$,
then the limits in Equation~\ref{E:densityratio} both diverge.
\end{theorem}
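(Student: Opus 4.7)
The plan is to apply the LID Representation Theorem (Theorem~\ref{T:id-rep}) pointwise along the trajectory, taking $r=\beta(u)$ and $w=\alpha(u)$, and then to argue that the auxiliary factor $\expaux{F}(\beta(u),\alpha(u))$ collapses to $1$ in the limit. Because $\alpha,\beta$ are continuously differentiable and strictly increasing on $[0,c)$ with $\alpha(0)=\beta(0)=0$, for every $u$ sufficiently small both $\beta(u)$ and $\alpha(u)$ lie in an arbitrarily small neighborhood of $0$, which is exactly the regime on which $F$ must be smooth and positive for $\IDstar_F$ to be well-defined in the first place. On such a neighborhood Theorem~\ref{T:id-rep} applies and yields
\begin{equation*}
\frac{F(\beta(u))}{F(\alpha(u))}
=
\left(\frac{\beta(u)}{\alpha(u)}\right)^{\IDstar_F}
\cdot
\expaux{F}(\beta(u),\alpha(u)).
\end{equation*}

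Next, I would invoke the property of $\expaux{F}$ recalled immediately after Theorem~\ref{T:id-rep}: whenever its two arguments both tend to $0$ with bounded log-ratio, the auxiliary factor tends to $1$. If $\lambda$ exists as a finite positive number, then $\ln(\beta(u)/\alpha(u))\to\ln\lambda$ is bounded near $0$, so $\expaux{F}(\beta(u),\alpha(u))\to 1$, while continuity of $x\mapsto x^{\IDstar_F}$ at $\lambda$ gives $(\beta(u)/\alpha(u))^{\IDstar_F}\to\lambda^{\IDstar_F}$. Multiplying the two limits yields the equality asserted in Equation~\ref{E:densityratio}.

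The remaining work is the two degenerate regimes, where the bounded-log-ratio hypothesis fails and Theorem~\ref{T:id-rep} cannot be invoked directly. For $\lambda=0$, fix any $\epsilon>0$; eventually $\beta(u)\le\epsilon\,\alpha(u)$, and monotonicity of $F$ gives
\begin{equation*}
\frac{F(\beta(u))}{F(\alpha(u))}
\le
\frac{F(\epsilon\,\alpha(u))}{F(\alpha(u))}.
\end{equation*}
The right-hand side falls under the finite case already established, with ratio identically $\epsilon$, so it tends to $\epsilon^{\IDstar_F}$. Since $\IDstar_F>0$ and $\epsilon$ is arbitrary, the $\limsup$ of the original ratio is $0$, matching $\lambda^{\IDstar_F}=0$. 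The case $\lambda=+\infty$ is symmetric: for any $M>0$, eventually $\beta(u)\ge M\,\alpha(u)$, whence $F(\beta(u))/F(\alpha(u))\ge F(M\alpha(u))/F(\alpha(u))\to M^{\IDstar_F}$, forcing the $\liminf$ to exceed every $M^{\IDstar_F}$ and the ratio to diverge.

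The main obstacle is the dependence of the central step on the boundedness of $\ln(\beta(u)/\alpha(u))$; the degenerate $\lambda\in\{0,\infty\}$ regimes cannot use Theorem~\ref{T:id-rep} at $(\beta(u),\alpha(u))$ directly and instead require the sandwich argument above, which reduces them back to instances of the finite-$\lambda$ case. A secondary technical point is ensuring that $F$ is smooth and positive on a whole one-sided neighborhood of $0$ rather than merely at each individual point along the trajectory; this is guaranteed by the standing hypothesis that $\IDstar_F$ exists.
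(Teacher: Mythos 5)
Your proof is correct. The paper states Theorem~\ref{T:densityratio} without proof (citing \cite{Houle20}), and your route --- applying the LID Representation Theorem along the trajectory $(r,w)=(\beta(u),\alpha(u))$, eliminating the auxiliary factor $\expaux{F}$ via the bounded-log-ratio property, and handling the degenerate regimes $\lambda=0$ and $\lambda=+\infty$ by a monotonicity sandwich that reduces to the constant-ratio case --- is exactly the intended argument, with the sandwich step being precisely where the non-decreasing hypothesis on $F$ and the positivity of $\IDstar_{F}$ are needed.
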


In the context of a distance distribution and its CDF $F$, Theorem~\ref{T:densityratio} shows the convergence between the ratio of two neighborhood probabilities with different radii, and the ratio of these radii taken to the power of the local intrinsic dimensionality, $\IDstar_F$. In the following section, this result will be used to derive a dimensionality-aware estimator of a distributional local density-based outlierness criterion. 

\section{The Dimensionality-Aware Outlier Model}
\label{sec:dao}

As discussed previously, local outliers can in general be found by comparing the density of the neighborhood of a point to the densities of the neighborhoods of that point's neighbors. Here, we emulate the design choices of traditional (discrete) density-based outlier models using distributional concepts, to produce a theoretical dimensionality-aware model that treats the dataset as samples drawn from some unknown underlying distribution that is assumed to be continuous everywhere except (perhaps) at the query location. After establishing our model, we develop a practical estimator of outlierness suitable for use on discrete datasets.

\subsection{Asymptotic Local Density Ratio.}

For any distribution over an isometric representation space, the volume $V(\epsilon)$ of any ball of radius $\epsilon$ is the same throughout the domain. For a suitably small choice of $\epsilon$, we consider the density at a point $\pp$ to be the probability measure $F_{\pp}(\epsilon)$ associated with its $\epsilon$-neighborhood ball $B_{\pp}(\epsilon)$, divided by the volume of the ball, $V(\epsilon)$. Note that in any such density ratio between a test point $\qq$ and its neighbor $\oo$, the volumes cancel out to produce the simple ratio $\nicefrac{F_{\oo}(\epsilon)}{F_{\qq}(\epsilon)}$. Rather than aggregating density ratios for a fixed number of discrete neighbors, we instead reason in terms of the expectation of density ratios involving a random sample $\oo$ drawn from $B_{\qq}(\epsilon)$:
\begin{equation*}
\label{E:generaloutlier}
\expect_{\oo\in B_{\qq}(\epsilon)}
\left[
\frac{
F_{\oo}(\epsilon)
}{
F_{\qq}(\epsilon)
}
\right]
\, .
\end{equation*}

In practice, models for outlier detection are faced with the problem of deciding the neighborhood radius $\epsilon$, or neighborhood cardinality $k$. Here, we resolve this issue by  examining the tendency of our density-based criterion as the ball radius tends to zero, thereby obtaining a ratio of infinitesimals. Accordingly, we define the asymptotic local expected density ratio (ALDR) of a query point $\qq$ to be:
\begin{eqnarray*}
\ALDR (\qq)
&\triangleq&
\lim_{\epsilon\to 0^{+}}
\expect_{\oo\in B_{\qq}(\epsilon)}
\left[
\frac{
F_{\oo}(\epsilon)
}{
F_{\qq}(\epsilon)
}
\right]
\, .
\end{eqnarray*}

Intuitively, an ALDR score of 1 is associated with inlierness:
it indicates that the probability measure function $F_{\qq}$
in the vicinity
of the test point $\qq$ agrees perfectly with that of its neighbors,
in that their (expected) local probability measures
$F_{\oo}$
converge to 
$F_{\qq}$
as $\oo$ tends to $\qq$.

A limit value different than 1 
indicates a discontinuity of the neighborhood
probability measure 
at $\qq$ relative to its neighbors.
Limit values greater than 1 (including the case where ALDR diverges to infinity) 
are associated with outlierness in the usual sense of sparseness: they
indicate that the test point has a local probability measure 
too small to be consistent with those of its neighbors in the domain. 
Limit values less than 1 can also be regarded as anomalous, in that they
can be interpreted as an abnormally large concentration of probability
measure at the individual point $\qq$. 
In both cases, the degree of discontinuity can be regarded as increasing 
as the ratio diverges from 1.
In this paper, however, we will be concerned with identifying cases for which the $\ALDR$ score exceeds 1 (sparse outlierness).
%Equivalently, limit values different from 1 can also be taken as an indication that the distance measure from $\qq$ to the remaining points of $\domain$ is anomalous in nature.

%(DAO avoids the pitfalls of definitions of outlierness based on the notion of probability density...)

\subsection{Dimensionality-Aware Reformulation of ALDR.}

For the purposes of deriving an estimator of $\ALDR$, we introduce a minor reformulation.
Instead of taking the radius of the ball $B_{\qq}$ to be the same as the neighborhood radius within which probability measure is assessed around $\qq$ and $\oo$, we decouple the rates by which these radii tend to zero. In the reformulation, the inner limit controls the neighborhood radius, and the outer limit controls the ball radius.
\begin{eqnarray*}
\ALDR' (\qq)
&\triangleq&
\lim_{\epsilon\to 0^{+}}
\expect_{\oo\in B_{\qq}(\epsilon)}
\left[
\lim_{\gamma\to 0^{+}}
\frac{
F_{\oo}(\gamma)
}{
F_{\qq}(\gamma)
}
\right]
\, .
\end{eqnarray*}

With this decoupling, we 
apply Theorem~\ref{T:densityratio} to 
convert the ratio of neighborhood probabilities $\nicefrac{F_{\oo}(\gamma)}{F_{\oo}(\gamma)}$ to one that involves only distances and LID values.
Given a probability value $p\in[0,1]$ and any point $\oo$ in the domain,
let $\delta_{\oo}(p)$ be the infimum of the distance values $r$ for which
$F_{\oo}(r)=p$. This definition ensures that if 
$F_{\oo}$ is continuously differentiable at $\delta_{\oo}(p)$,
then $F_{\oo}(\delta_{\oo}(p))=p$, and the distance function $\delta_{\oo}$ is also continuously differentiable at $p$.

\begin{theorem}
\label{T:ALDR}
Let $\qq$ be a query point.
If there exists a constant $c>0$ such that 
for all $\oo\in B_{\qq}(c)\setminus\{\qq\}$,
\begin{itemize}
\item
$F_{\oo}$ is continuously differentiable over the range
$[0,c]$,
\item
$\IDstar_{F_{\oo}}$ exists and is positive,
and
\item
the limit of 
$\nicefrac{
\delta_{\qq}(p)
}{
\delta_{\oo}(p)
}$ 
as $p\to 0^{+}$ either exists or diverges to $+\infty$,
\end{itemize}
then
\begin{eqnarray*}
\ALDR' (\qq)
& = &
\lim_{\epsilon\to 0^{+}}
\expect_{\oo\in B_{\qq}(\epsilon)}
\left[
\lim_{p \to 0^{+}}
\left(
\frac{
\delta_{\qq}(p)
}{
\delta_{\oo}(p)
}
\right)^{\IDstar_{F_{\oo}}}
\right]
.
\end{eqnarray*}
\end{theorem}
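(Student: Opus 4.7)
The plan is to prove the theorem by reducing the inner limit in the definition of $\ALDR'(\qq)$, pointwise in $\oo$, via a single application of Theorem~\ref{T:densityratio}. For each fixed $\oo \in B_{\qq}(c)\setminus\{\qq\}$, the target identity is
\[
\lim_{\gamma\to 0^+}\frac{F_{\oo}(\gamma)}{F_{\qq}(\gamma)} \: = \: \lim_{p\to 0^+}\left(\frac{\delta_{\qq}(p)}{\delta_{\oo}(p)}\right)^{\IDstar_{F_{\oo}}}.
\]
Once this holds for each such $\oo$, substituting it into the bracketed expression of $\ALDR'(\qq)$ yields the theorem directly, with no further manipulation of the outer expectation or of the outer limit in $\epsilon$.

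To obtain the identity, I would reparameterize by the common quantile level $p = F_{\qq}(\gamma)$, equivalently $\gamma = \delta_{\qq}(p)$; continuity of $F_{\qq}$ at $0$ with $F_{\qq}(0)=0$ makes $\gamma\to 0^+$ equivalent to $p\to 0^+$. Since $F_{\oo}\circ\delta_{\oo}$ is the identity wherever $F_{\oo}$ is strictly increasing, the substitution rewrites the denominator as $F_{\qq}(\gamma) = p = F_{\oo}(\delta_{\oo}(p))$, giving
\[
\frac{F_{\oo}(\gamma)}{F_{\qq}(\gamma)} \: = \: \frac{F_{\oo}(\delta_{\qq}(p))}{F_{\oo}(\delta_{\oo}(p))}.
\]
This right-hand side involves only the single CDF $F_{\oo}$ --- exactly the form to which Theorem~\ref{T:densityratio} applies, with $F = F_{\oo}$, $\beta(p)=\delta_{\qq}(p)$, and $\alpha(p)=\delta_{\oo}(p)$. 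The theorem then delivers $(\delta_{\qq}(p)/\delta_{\oo}(p))^{\IDstar_{F_{\oo}}}$ in the limit, as required.

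To discharge the hypotheses of Theorem~\ref{T:densityratio}: monotonicity of $F_{\oo}$ and positivity of $\IDstar_{F_{\oo}}$ are given outright; the hypothesis that $\delta_{\qq}(p)/\delta_{\oo}(p)$ either converges or diverges to $+\infty$ as $p\to 0^+$ is the third stated bullet; and $\alpha(0)=\beta(0)=0$ together with continuous differentiability and strict monotonicity of $\alpha,\beta$ on some $[0,c')$ follow via the inverse-function theorem from continuous differentiability of $F_{\oo}$ (and implicitly $F_{\qq}$), after observing that Theorem~\ref{T:fundamental} combined with $\IDstar_{F_{\oo}}>0$ forces $F_{\oo}'(r) = \ID_{F_{\oo}}(r)\cdot F_{\oo}(r)/r$ to be strictly positive on a deleted right-neighborhood of $0$. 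The main obstacle is that the stated hypotheses mention only $F_{\oo}$ for $\oo\neq\qq$, while the change of variables demands comparable regularity for $F_{\qq}$; I would argue that this regularity is implicit both in the well-formedness of the third bullet (which already refers to $\delta_{\qq}$) and in the paper's standing assumption that the underlying distribution is continuous everywhere except possibly at $\qq$ itself. With this clarified, Theorem~\ref{T:densityratio} finishes the argument in essentially one line.
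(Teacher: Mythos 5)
Your proposal is correct and follows essentially the same route as the paper's own proof: reparameterize the inner limit via $\gamma=\delta_{\qq}(p)$, use the quantile identity $F_{\qq}(\delta_{\qq}(p))=F_{\oo}(\delta_{\oo}(p))=p$ to express the ratio entirely in terms of $F_{\oo}$, and then apply Theorem~\ref{T:densityratio} with $\beta=\delta_{\qq}$ and $\alpha=\delta_{\oo}$. In fact you are somewhat more careful than the paper, both in verifying the monotonicity and differentiability hypotheses of Theorem~\ref{T:densityratio} for $\alpha$ and $\beta$ and in flagging the implicit regularity requirement on $F_{\qq}$, which the published argument passes over silently.
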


\begin{proof}
From the assumption of continuous differentiability
of $F_{\oo}$ for all $\oo\in B_{\qq}(c)$,
we can determine a probability value $p_{0}$
such that $\delta_{\oo}(p_{0})<c$.
This allows the distance variable 
$\gamma$ in the definition of $\ALDR'$ to be
expressed in terms of a neighborhood probability $p<p_0$,
with $\gamma=\delta_{\qq}(p)$. The inner limit
can then be stated as a tendency of $p$ to zero, 
as follows:
\begin{eqnarray*}
\ALDR' (\qq)
&=&
\lim_{\epsilon\to 0^{+}}
\expect_{\oo\in B_{\qq}(\epsilon)}
\left[
\lim_{p\to 0^{+}}
\frac{
F_{\oo}(\delta_{\qq}(p))
}{
F_{\qq}(\delta_{\qq}(p))
}
\right]
\, .
\end{eqnarray*}
Noting that $F_{\qq}(\delta_{\qq}(p))=F_{\oo}(\delta_{\oo}(p))=p$,
we obtain
\begin{eqnarray}
\label{E:ALDR}
~~~~~~~\ALDR' (\qq)
&=&
\lim_{\epsilon\to 0^{+}}
\expect_{\oo\in B_{\qq}(\epsilon)}
\left[
\lim_{p\to 0^{+}}
\frac{
F_{\oo}(\delta_{\qq}(p))
}{
F_{\oo}(\delta_{\oo}(p))
}
\right]
\, .
%\nonumber
\end{eqnarray}

For any
$\oo\in B_{\qq}(r)\setminus\{\qq\}$,
from the assumption that
$F_{\oo}$ is continuously differentiable over the range $[0,r]$
and that $\IDstar_{F_{\oo}}$ exists,
Theorem~\ref{T:densityratio} implies that
\begin{eqnarray*}
\lim_{p \to 0^{+}}
\frac{
F_{\oo}(\delta_{\qq}(p))
}{
F_{\oo}(\delta_{\oo}(p))
}
&=&
\lim_{p \to 0^{+}}
\left(
\frac{
\delta_{\qq}(p)
}{
\delta_{\oo}(p)
}
\right)^{\IDstar_{F_{\oo}}}
%\, .
.
\end{eqnarray*}
Substituting into Equation~\ref{E:ALDR}, the result follows.
\end{proof}

In the proof of Theorem~\ref{T:ALDR}, we took advantage of the formulation of $\ALDR'$, in that by isolating the limit of the density ratio $\nicefrac{F_{\oo}(\gamma)}{F_{\oo}(\gamma)}$, we could apply Theorem~\ref{T:densityratio} to convert it to a form that depends on neighborhood radii and LID values. It should be noted that this conversion cannot not be performed on the original formulation, $\ALDR$, due to the nesting of the density ratio within an expectation taken over a shrinking ball. In general, the order of limits of functions (including integrals and continuous expectation) cannot be interchanged unless certain conditions are known to hold, such as absolute continuity of the functions involved.

\subsection{The Dimensionality-Aware Outlierness Criterion.}

We now make use of the formulation in the statement of Theorem~\ref{T:ALDR} to produce a practical estimate of $\ALDR'$ on finite datasets. For this, we consider the value of $\ALDR'$ for small (but positive) choices of the limit parameters $\epsilon$ and $p$. 

Following the convention of $\LOF$, $\SLOF$, and other traditional outlier detection algorithms, the ball radius can be set to the familiar $k$-nearest neighbor distance,
$\epsilon=\kdist({\qq})$. Similarly, if $n$ is the size of the dataset, choosing $p=\nicefrac{k}{n}$ would set $\delta_{\qq}(p)$ and $\delta_{\oo}(p)$ to the distances at which their associated neighborhoods would be expected to contain $k$ samples out of $n$; these distances can be approximated by the $k$-NN distances $\kdist({\qq})$ and $\kdist({\oo})$, respectively. Note that by fixing $k$ to some reasonably small value, we have the desirable effect that the probability $p$ tends to zero as the dataset size $n$ increases.

Given these choices for $\epsilon$ and $p$, the expectation in the formulation of $\ALDR'$ can be estimated by taking the average over the $k$ nearest neighbors of $\qq$.

%Finally, we note that many estimators for $\IDstar_{F_{\oo}}$ are known to exist~\cite{}; in principle, any of them could be used. Rather than specifying the estimator here, we leave this as an implementation choice.

Using these approximation choices, we now state our proposed dimensionality-aware outlierness criterion, $\DAO$:
\begin{equation}
\label{E:DAO}
\DAO_k(\qq)
\:\: \triangleq \:\:
\frac{1}{k}
\sum_{\oo\in \nn_k(\qq)}
\left(
\frac{\kdist(\qq)}{\kdist(\oo)}
\right)^{\widehat{\IDstar_{F_{\oo}}}}
    \, ,
\end{equation}
where the neighborhood size $k$ is a hyperparameter, and the LID estimator $\widehat{\IDstar_{F_{\oo}}}$
is left as an implementation choice.

Although $\DAO$ is theoretically justified as an estimator of $\ALDR'$ by Theorem~\ref{T:ALDR}, it also can serve as an estimator of $\ALDR$.
Setting $\epsilon=\delta_{\qq}(\nicefrac{k}{n})$,
we obtain
\begin{eqnarray*}
\ALDR(\qq)
& \approx &
\frac{1}{k}
\sum_{\oo\in \nn_k(\qq)}
\frac{F_{\oo}(\delta_{\qq}(\nicefrac{k}{n}))}{F_{\qq}(\delta_{\qq}(\nicefrac{k}{n}))}
\\
& = &
\frac{1}{k}
\sum_{\oo\in \nn_k(\qq)}
\frac{F_{\oo}(\delta_{\qq}(\nicefrac{k}{n}))}{F_{\oo}(\delta_{\oo}(\nicefrac{k}{n}))}
\, ,
\end{eqnarray*}
each term of which can be approximated using the limit equality stated in Theorem~\ref{T:ALDR}:
\begin{equation*}
\ALDR(\qq)
\:\:\approx\:\:
\frac{1}{k}
\sum_{\oo\in \nn_k(\qq)}
\left(
\frac{\delta_{\qq}(\nicefrac{k}{n})}{\delta_{\oo}(\nicefrac{k}{n})}
\right)^{\IDstar_{F_{\oo}}}
\:\:\approx\:\:
\DAO_k(\qq)
\, .
\end{equation*}

We conclude this section by noting that $\DAO$ is nearly identical to $\SLOF$, with the exception that the \mbox{$k$-NN} distance ratio of $\DAO$ has exponent equal to the LID of the neighbor $\oo$. In essence, $\SLOF$ makes the implicit (but theoretically unjustified) assumption that the underlying local intrinsic dimensionalities are equal to 1 at every neighbor. We also note that the dimensionality-aware $\DAO$ criterion has a computational cost similar to that of $\SLOF$ whenever a linear-time LID estimator is employed (such as MLE ~\cite{LevBic04,AmsalegCFGHKN18}, reusing the \mbox{$k$-NN} queries also required to compute the outlier scores).

\section{Evaluation}\label{sec:eval}

\subsection{Methods and Parameters} \label{subsec:eval_methods_par}

\subsubsection{Outlier detection algorithms.}
We compare $\DAO$ against its dimensionality-unaware counterpart $\SLOF$~\cite{SchZimKri14}, as well as $\LOF$~\cite{BreKriNgSan00} and $\knn$~\cite{AngPiz02,RamRasShi00}, the two models with best overall performance from the extensive comparative study in \cite{CamZimSanCametal16}. For each method, we vary its neighborhood size hyperparameter $k$ from 5 to 100, and report the results for the value achieving the highest ROC AUC score. 

\subsubsection{LID estimators.}
%Since LID values used by $\DAO$ are not known in practice, we must first estimate them.
In our experimentation, we employ four different estimators of local intrinsic dimensionality: the classical maximum-likelihood estimator (MLE)~\cite{Hil75, LevBic04,AmsalegCFGHKN18}, tight local estimation using pairwise distances (TLE)~\cite{amsaleg2019, amsaleg2022intrinsic}, two-nearest-neighbor point estimation (TwoNN)~\cite{facco2017}, and estimation derived from changes in parametric probability density after data perturbation (LIDL)~\cite{tempczyk2022}. For MLE, TLE and TwoNN, we vary the neighborhood size hyperparameter settings across $\{5, 10, 15, 30, 50, 90, 150, 260, 320, 450, 560, 780\}$.
% The LIDL estimates the LID using the rate of change in the density estimate of the point when adding a series of normally distributed perturbations to the data.
For LIDL, we used 11 values geometrically distributed between 0.025 and 0.1 as the relative magnitude of the perturbation (the hyperparameter $\delta$), as well as \mbox{RQ-NSF} and MoG as density estimators~\cite{tempczyk2022}. All other aspects of the experimental design, including the neural network architecture and hyperparameter settings, followed the choices and recommendations made by the authors. 
% As density estimators, we employed Rational-Quadratic Neural Spline Flows (RQ-NSF)~\cite{durkan2019} and Mixture of Gaussians (MoG).

\subsubsection{Implementation and code.}
For this study, the MLE and TLE estimators of LID, and all four of the outlier detection algorithms, were implemented by us in Python. Our TLE implementation is based on the most recent publicly available MATLAB implementation provided by the authors~\cite{amsaleg2022intrinsic}. For TwoNN, we use the implementation available in the Python library \emph{scikit-dimension}~\cite{BacMGTZ21}\footnote{\url{https://scikit-dimension.readthedocs.io/}}. For LIDL, we use the publicly available Python implementation provided by the authors~\cite{tempczyk2022}\footnote{\url{https://github.com/opium-sh/lidl/}}. All code used in our experiments is available in our repository at \url{https://github.com/homarques/DAO/}.

\subsubsection{Computing infrastructure.}
Our experiments were performed using
a Quad-Core Intel Core i7 2.7 GHz processor
with 16 GB of RAM.
%Our experiments were performed in a machine with 252 GB RAM and a processor AMD EPYC 7501 32-Core, 2.0 GHz.

\subsection{Datasets}
\subsubsection{Synthetic datasets.}
\label{subsec:datasets}
We use synthetic data to evaluate the behavior of the methods in a controlled environment that allows their performance to be assessed as the local intrinsic dimensionality is varied. 
We generated datasets consisting of two clusters ($c_1$ and $c_2$) embedded in $\mathbb{R}^{32}$, with each cluster containing 800 data points drawn from a standard Gaussian distribution ($\mu = \mathbf{0}$, $\Sigma = \mathbf{I}$). 
To achieve a contrast in the LID values between the clusters, luster $c_1$ was generated within a subspace of dimension 8, and cluster $c_2$ within subspaces of dimensionality varying between 2 and 32. Initially, these subspaces were identified through a random selection of coordinate values, with all unused coordinates being set to zero. 

To generate a labeling of data points as `inlier' or `outlier', we computed the Mahalanobis distance to both cluster centers using their respective covariance matrices.
For a normally distributed cluster in a manifold of dimensionality $m$, the Mahalanobis distances from the points to the cluster center follow a $\chi^2$ distribution with $m$ degrees of freedom.
The points that exhibit a Mahalanobis distance to their cluster center larger than the 0.95 quantile were labeled as outliers, which results in an expected proportion of 5\% outliers per cluster.

Finally, after generating cluster members within these axis-parallel subspaces and labeling them as either `inlier' or `outlier', the resulting point clouds were then moved to more general positions, through a two-step procedure: translation by a vector with coordinates randomly drawn from a uniform distribution in $[-10, 10]$, followed by a random rotation. The rotation was performed by computing an orthonormal basis of a matrix with entries uniformly distributed in $[-1,1]$, and then projecting the point cloud to this new representation.

Overlap between $c_1$ and $c_2$ could result in outliers generated from one cluster being identified as an inlier of the other. In order to maintain separation between the clusters, we discarded (and regenerated) any dataset having one or more points that are simultaneously `close' to both cluster centers. This rejection condition was enforced strictly, by determining Mahalanobis distances to the two cluster centers and testing whether both distances are smaller than their respective $0.99999$ quantiles.

The data generation procedure described above was performed 30 times, to produce a total of 480 synthetic datasets (30 realizations of 16 dataset templates).

\subsubsection{Real-world datasets.}
% For experiments simulating real-world applications,
In our experimentation, we make use of datasets drawn from 6 different repositories for outlier detection. 
Since the outlier models of interest here implicitly assume the presence of continuous, real-valued attributes, we excluded datasets for which no attribute spanned at least 20\% distinct values. Also, for simplicity and ease of comparison, % (e.g., $\LOF$ when computing density estimates)
all duplicate records were dropped from the datasets. 
%Also, for computational reasons, we considered only datasets with at most 50,000 examples. 

A summary of the dataset collections is given in Table~\ref{tab:datasets}.
The first collection comprises 15 real datasets taken from a publicly available outlier detection repository~\cite{CamZimSanCametal16}.
The selection of datasets followed the same criterion adopted in~\cite{marques2020}, which is based on quantitative measures of suitability of a dataset for outlier detection benchmarking.
We also used 3 additional datasets from the augmented collection in~\cite{marques2020}.
The third collection contains 11 real multi-dimensional datasets taken from a publicly available outlier detection repository of various data types~\cite{rayana2016}. 
The fourth collection consists of 3 datasets used in a comparative study of algorithms for unsupervised outlier detection~\cite{goldstein2016}.
The fifth collection comprises 11 datasets used in a meta-analysis of anomaly detection methods~\cite{emmott2016}.
Finally, the last collection consists of 350 datasets from a publicly available outlier detection repository containing datasets engineered to have certain fixed proportions of outliers~\cite{kandanaarachchi2020}.
We selected real-valued datasets having 2\% of the members labeled as outliers. It is worth noting that the removal of duplicate points, if present, altered the overall proportion of outliers in these datasets.
Overall, 393 real datasets were selected for our experimentation.

\begin{table}[tbp]
\centering
\caption{\label{tab:datasets}Summary of 393 real datasets, with ranges showing numbers of features, dataset sizes, and proportions of outliers.}
\resizebox{0.49\textwidth}{!}{
\begin{tabular}{@{}lcccc@{}}
\toprule
Repository & \multicolumn{1}{c}{Features}  & \multicolumn{1}{c}{Size}            & \multicolumn{1}{c}{Outliers}        & Datasets \\ \midrule
Campos \textit{et al.} \cite{CamZimSanCametal16} & {[}5,  \hfill  259{]} & {[}50,  \hfill  49534{]} & {[}3\%, \hfill  36\%{]} &  15 \\
Marques \textit{et al.} \cite{marques2020} & {[}10,  \hfill  649{]} & {[}100,  \hfill  910{]} & {[}1\%, \hfill  10\%{]} &  3 \\
Rayana \cite{rayana2016} & {[}6,  \hfill  274{]} & {[}129,  \hfill  7848{]} & {[}2\%, \hfill  36\%{]} &  11 \\
Goldstein \& Uchida \cite{goldstein2016} & {[}27,  \hfill  400{]} & {[}367,  \hfill  49534{]} & {[}2\%, \hfill  3\%{]} &  3 \\
Emmott \textit{et al.} \cite{emmott2016} & [7, \hfill 128] & [992, \hfill 515129] & [9\%, \hfill 50\%] & 11 \\ 
Kandanaarachchi \textit{et al.} \cite{kandanaarachchi2020} & {[}2, \hfill  649{]} & {[}72,  \hfill  9083{]} & {[}1\%, \hfill  3\%{]} &  350 \\
\midrule
Overall & [2, \hfill 649] & [50, \hfill 515129] & [1\%, \hfill 50\%] & 393 \\ \bottomrule
\end{tabular}}
\end{table}

\subsubsection{Data availability.} 

For the sake of reproducibility, we have made available the collection of synthetic datasets used in our experiments, as well as the code used to generate them. For the real-world datasets, which come from third-party sources, we provide a list with the names of the dataset variants used in our experiments as well as pointers to the corresponding repositories. This information is available at \url{https://github.com/homarques/DAO/}.

\section{Experimental Results}\label{sec:results}

\subsection{Evaluation of LID Estimation on $\DAO$ Performance.} \label{sec:exp:synth:estimators}

We begin our experiments by first evaluating the performance of $\DAO$ when equipped with the LID estimates produced by 5 different estimators, namely, MLE, TLE, TwoNN, and 2 LIDL variants with different density estimators.
Figure \ref{fig:synthetic} shows the ROC AUC performance of all four outlier detection algorithms on the 480 synthetic datasets. The $x$-axis refers to cluster $c_2$, whose intrinsic dimension varies across datasets. Central points on solid lines indicate the average across 30 datasets, whereas the vertical bars represent the standard deviation. 

We first focus on the performance of $\DAO$ when equipped with different LID estimators. The use of MLE and TLE yielded the best performances. With these estimators, $\DAO$ showed no apparent loss of performance as the contrast in the intrinsic dimensions of the two clusters increased.
%These two estimators also showed the best runtime. On average, MLE took 0.03s to make the LID estimations for all the 12 different values of neighborhood size, while TLE took 56.5s.
Note, however, that TLE produces a smoothed LID estimate from pairwise distances within neighborhoods~\cite{amsaleg2022intrinsic}, which has the potential of compromising estimates for outlier points when all their neighbors are inliers. 
%Moreover, when the neighborhood size is large, TLE's use of pairwise distances makes it considerably more expensive to compute than MLE, which computes only one distance per neighbor.
Therefore, despite the good performance shown in our experiments, TLE is less preferable than MLE for outlier detection tasks.

TwoNN produces rough LID estimates using only the distances to the two closest neighbors. When using this simple estimator, $\DAO$ showed some loss of performance as the difference in the cluster dimensionalities increased. However, the drop in performance is less than that of both $\SLOF$ and $\LOF$, which do not take dimensionality into account. 
% Therefore, even by using simple estimates, one can see that it is beneficial to take into account the dimensionality of the ambient space.

The only LID estimator for which $\DAO$ (partially) underperformed $\SLOF$ and $\LOF$ is LIDL. This estimator relies on density estimation in the presence of normally-distributed perturbations. % to circumvent the high dimensionality challenges faced by traditional distance-based estimators. We used two different density estimation methods. A Mixture of Gaussians (MoG) and neural density estimator RQ-NSF. By using MoG for the density estimations, LIDL produced competitive results to those of $\SLOF$ up to 12 dimensions, even superior up to 8 dimensions. 
With Mixture of Gaussians (MoG) density estimation, Figure~\ref{fig:synthetic} shows a deterioration in the performance of $\DAO_{\LIDL}$ outlier detection as the intrinsic dimension increases, likely due to degradation in the quality of MoG density estimates. 
%
% In order to overcome the problem with density estimates in high-dimensional spaces, the authors recommended the use of neural density estimators.
When using instead the more sophisticated neural density estimator (RQ-NSF), the performance of $\DAO_{\LIDL}$ drops further, even on low-dimensional datasets. We conjecture that this is due to the fact that neural networks are biased to learn from the majority (inliers) and, as such, they may overlook the contributions of LID estimates in the vicinity of our targets (outliers).

\begin{figure}[tbp]\centering
  \includegraphics[width=\columnwidth]{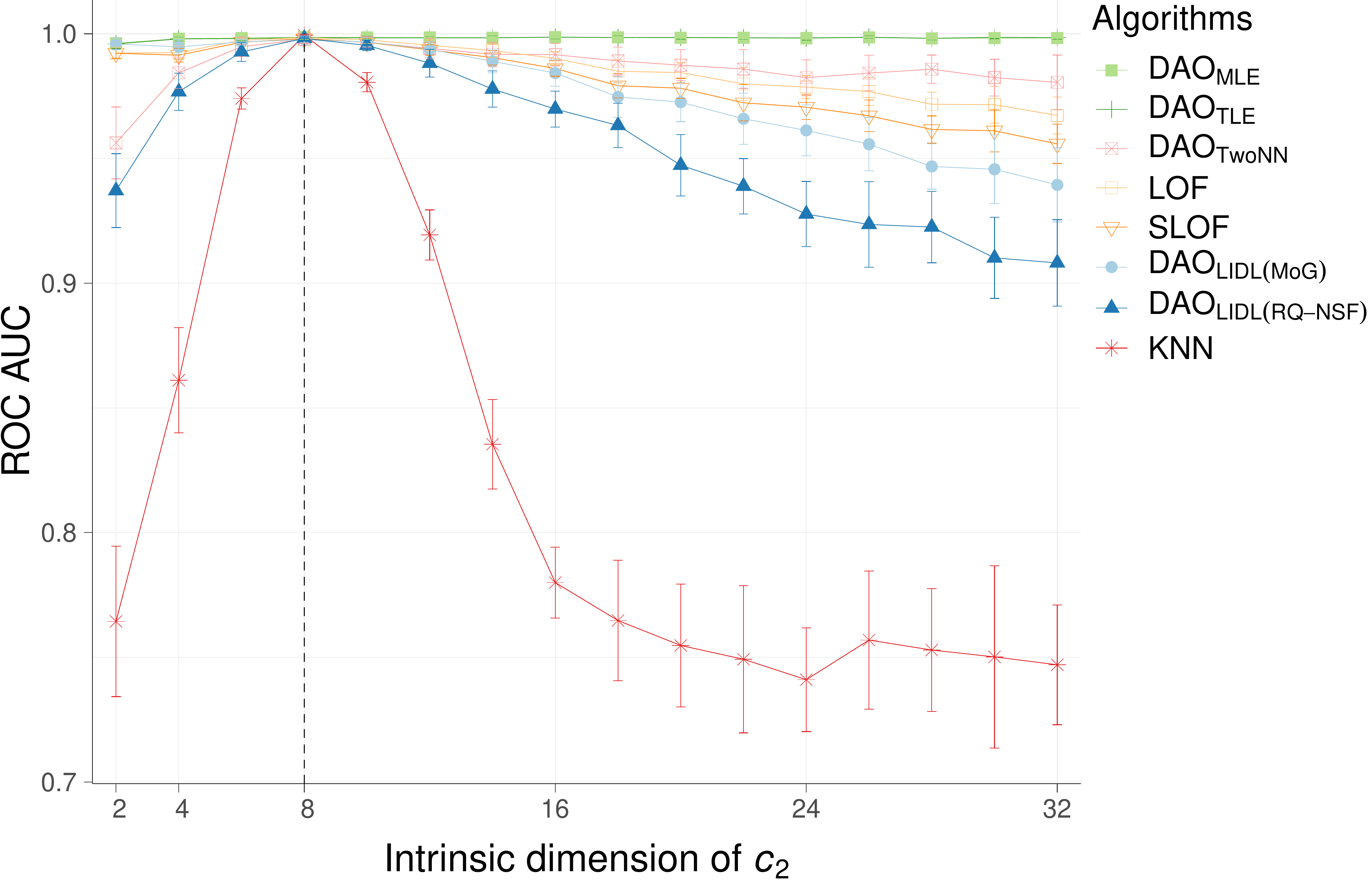}
  \caption{ROC AUC values for outlier detection performance over 480 synthetic datasets containing 2 clusters. One of the clusters ($c_1$) has intrinsic dimension fixed at 8. The intrinsic dimension of the other cluster ($c_2$) varies across the datasets ($x$-axis). The dashed vertical line indicates the reference set with both clusters sharing the same intrinsic dimension (8). The results shown are averages over 30 datasets with the same characteristics. Bars indicate standard deviation.}
  \label{fig:synthetic}
\end{figure}

% \subsubsection*{Final remarks} In this section, we thoroughly discussed the use of different estimators. 
%Overall, our experimental results showed to be beneficial to take into account the dimensionality of the space, even by using simple estimators such as TwoNN. However, one should be careful when making the LID estimates. Differently from other contexts, here, we are not only interested in the overall quality of the LID estimates but also in the quality of the estimates for specific regions where we may have outliers. Therefore, it is important to be cautious when using techniques such as the one used by TLE to make the LID estimates vary smoothly over the domain of the underlying distribution, or the use of LIDL, which turns out to learn from the mainstream by using neural networks. These techniques may cause undesired effects on the estimates in the regions of outliers. Nevertheless, they are still valuable tools for improving the overall quality of estimates in contexts other than outlier detection. Based on this discussion, in the following experiments, we use $\DAO$ with the estimates provided by MLE. However, except for LIDL, using any of the other estimators would lead to similar conclusions.

%\subsubsection*{Final remarks}
Therefore, when choosing an estimator of LID,
%may have been shown to be a valuable tool for improving the overall quality of estimates in contexts other than outlier detection, 
it is important to consider its underlying principles and mechanisms, and to assess whether it may adversely affect the performance of outlier detection. Based on the above discussion, in the following experiments involving $\DAO$, we choose MLE for the estimation of LID. 
%except for LIDL, using any of the other estimators would lead to similar conclusions.

\subsection{Comparative Evaluation on Synthetic Datasets.}
%In this section, we evaluated the ROC AUC performance of the dimensionally aware method $\DAO$ and 3 traditional non-dimensionally aware methods, namely, $\knn$, $\LOF$, and $\SLOF$. 
%We use the same collection of 480 synthetic datasets with two manifolds to assess the behavior of their ROC AUC as the difference in the dimensionality of the manifolds increases. We compared the performance of both categories of methods in this scenario to discuss the implications of neglecting the dimensionality of the space.

We begin our analysis with the synthetic dataset collection, focusing on the relative performance between $\DAO$ and its 3 dimensionality-unaware competitors. 
From Figure \ref{fig:synthetic}, % we can see the behavior of the ROC AUC performance for the different outlier detection methods as the difference in the dimensionality of the two manifolds increases.
one can see that when both clusters share the same intrinsic dimensionality (8), $\DAO$ and its dimensionality-unaware competitors perform equally well. However, as the difference in the dimensionality of the cluster manifolds increases, the performances of $\SLOF$, $\LOF$, and $\knn$ degrade noticeably. The experiments also show that of the various LID-aware variants considered, $\DAO_{\MLE}$ had consistently superior performance as the dimensionality of cluster $c_2$ was varied.

Table~\ref{tb:synth:lm} shows linear regression models fitted to predict the difference in ROC AUC between $\DAO_{\MLE}$ and each dimension-unaware method, as a function of the difference in the intrinsic dimension of the two data clusters manifolds. Among these methods, the greatest degradation of performance is that of $\knn$.
From the slope of the linear regression, one can see that on average, the ROC AUC performance of $\knn$ as compared to $\DAO$ decreased by almost 0.01 for each unit increment in the difference between the dimensions of the two clusters. 
%%%%%%In some cases, $\knn$'s ROC AUC dropped below 0.75, a difference of over 0.25 as compared with $\DAO$.
%%%%%% Note, however, that the rate of loss is not completely linear. Once $\knn$ deteriorated enough to reach ROC AUC around 0.78, the performance decreased at a much smaller rate, which resulted in a not-perfect Pearson correlation $\rho$ of 0.806. Therefore, the decrease in the ROC AUC performance of $\knn$ is expected to be even higher than 0.01 for slight variations in the dimensionality within the dataset.

%%%%%% The highest performance loss of $\knn$ may be explained by the fact that it not only completely ignores the dimensionality of the space, but also ignores the observations in the vicinity of the point.
%%%%%% $\SLOF$, which also ignores the dimensionality of the ambient space, takes into account the observations in the vicinity of the point by comparing the density estimate of the point with those of its neighbors. This approach results in a loss of performance at a much smaller rate compared to $\knn$.
%%%%% $\LOF$, which operates similarly to $\SLOF$, uses the reachability distance to bring additional smoothness into the density estimates. The use of the reachability distance seems to alleviate even further the loss of ROC AUC performance with the increase in the difference in the dimensionality of the manifolds. The rate of performance loss of $\SLOF$ is around 36\% higher compared to $\LOF$. 

These experimental outcomes on synthetic data confirm the theoretical analysis in Section~\ref{sec:dao}, in that the performance of $\SLOF$ is seen to degrade relative to its dimensionality-aware variant $\DAO$, as the differences in the dimensions of the cluster subspaces increase. As one might expect, the degradation of $\LOF$ is slightly less rapid than that of its close variant $\SLOF$; however, the performance drop for $\knn$ is much more drastic. One reason is that $\knn$ is a \emph{global} outlier detection method with scores expressed in the units of distance, as opposed to $\LOF$ and $\SLOF$, which are unitless \emph{local} methods based on density ratios. $\knn$'s use of absolute distance thresholds as the outlier criterion tends to favor the identification of points from higher-dimensional local distributions as outliers, due to the concentration effect associated with the so-called `curse of dimensionality'. This tendency becomes more pronounced as the relative difference between the underlying dimensionalities increases.
%%%%%%Since the dimensionality was kept constant \emph{within each cluster}, the local relative densities used by $\LOF$ and $\SLOF$ and, accordingly, their resulting outlier scores are, within each cluster and in relative terms, less impacted by neglecting ID information. It is the difference across clusters that make the inter-cluster outlier scores less comparable in these methods, causing their ranks to swap and performance to degrade, despite their local computations. In practice, though, datasets may exhibit much more complex dimensionality profiles, with LIDs varying noticeably even within neighbourhoods. That is why in Section \ref{subsec:realdata} we will significantly extend our experimental analysis by using a large and diverse suite of real datasets.     

%%%%%% As for $\DAO$, which takes into account not only the observations in the vicinity of the point but also the dimensionality of the space, the performance of ROC AUC remains stable with the increase in the difference in the dimensionality of the manifolds.

\begin{table}[tbp]
\caption{Simple linear regression to predict the difference in ROC AUC between $\DAO_{\MLE}$ and its dimensionality-unaware competitors on synthetic datasets. The explanatory variable is the absolute difference between the intrinsic dimensions of the two cluster manifolds. For each, we show the slope $m$, the $p$-value, and the Pearson correlation $\rho$.\label{tb:synth:lm}}
\centering
\setlength{\tabcolsep}{4pt}
\begin{tabular}{l|ccc}
\toprule
ROC AUC             & \multicolumn{3}{c}{\specialcell{Regression on the absolute difference \\ between the IDs of the manifolds}} \\ \midrule
                    & $m$                & $p$              & $\rho$                \\
$\DAO:\knn$    & 0.0099               &  1e-4                  &  0.806                 \\
$\DAO:\SLOF$   & 0.0018               &  8e-14                 &  0.991                \\
$\DAO:\LOF$    & 0.0013               &  3e-14                 &  0.992                \\ \bottomrule
\end{tabular}
\end{table}

\begin{table}[tbp]
\caption{Simple linear regression to predict the difference in ROC AUC between $\DAO_{\MLE}$ and its dimensionality-unaware competitors on real datasets. The explanatory variables are the dispersion $R$ and the Moran's I autocorrelation, both with respect to log-LID values. For each, we show the slope $m$, the $p$-value, and the Pearson correlation $\rho$. \label{tb:lm}}
%\small
\setlength{\tabcolsep}{4pt}
\begin{tabular}{@{}l|ccc|ccc}
\toprule
\multicolumn{1}{l|}{ROC AUC}     & \multicolumn{3}{c|}{$R$ (MAD)} & \multicolumn{3}{c}{Moran's I} \\ \midrule
             & $m$  & $p$  & $\rho$ & $m$     & $p$    & $\rho$    \\
$\DAO:\knn$   &  0.059  &  6e-3  &  0.14  &  -0.075  & 1e-5  &  -0.21 \\
$\DAO:\SLOF$   &  0.051  &  4e-12  &  0.34  &  -0.021  & 5e-4  &  -0.17 \\
$\DAO:\LOF$   &  0.046  &  5e-6  &  0.23  &  -0.016  & 5e-2  &  -0.1 \\
%$\DAO$ - Oracle   &  0.034  &  0.0009  &  0.17  &  -0.035  & 2-e5  &  -0.21      \\ 
\bottomrule
\end{tabular}
\end{table}

%\begin{figure*}[tbp]\centering
\begin{figure*}[t]\centering
\subfigure[$\DAO$ vs.~$\SLOF$]{\includegraphics[width=0.35\textwidth]{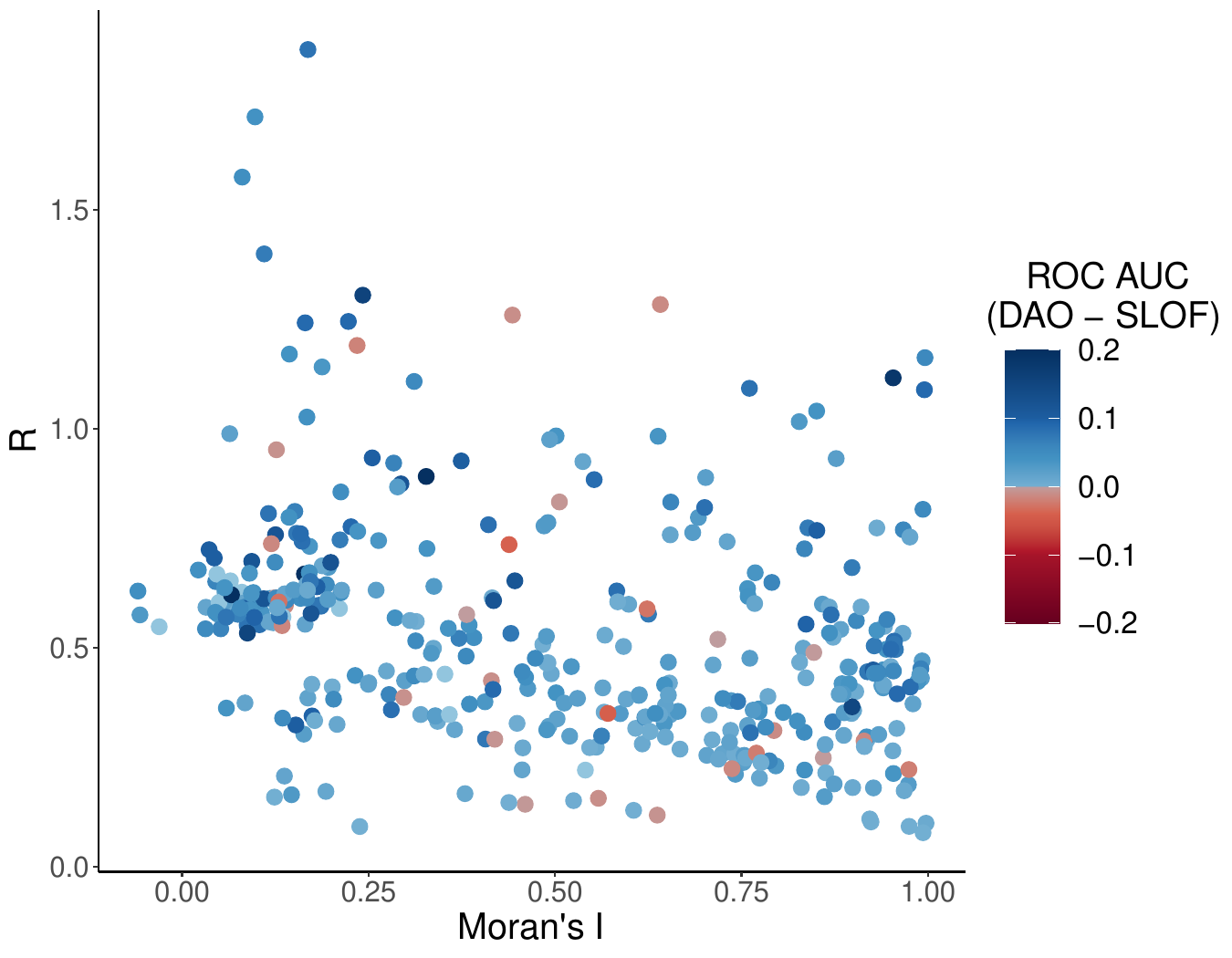}\label{fig:scatterplot:slof}}
\subfigure[$\DAO$ vs.~$\LOF$]{\includegraphics[width=0.35\textwidth]{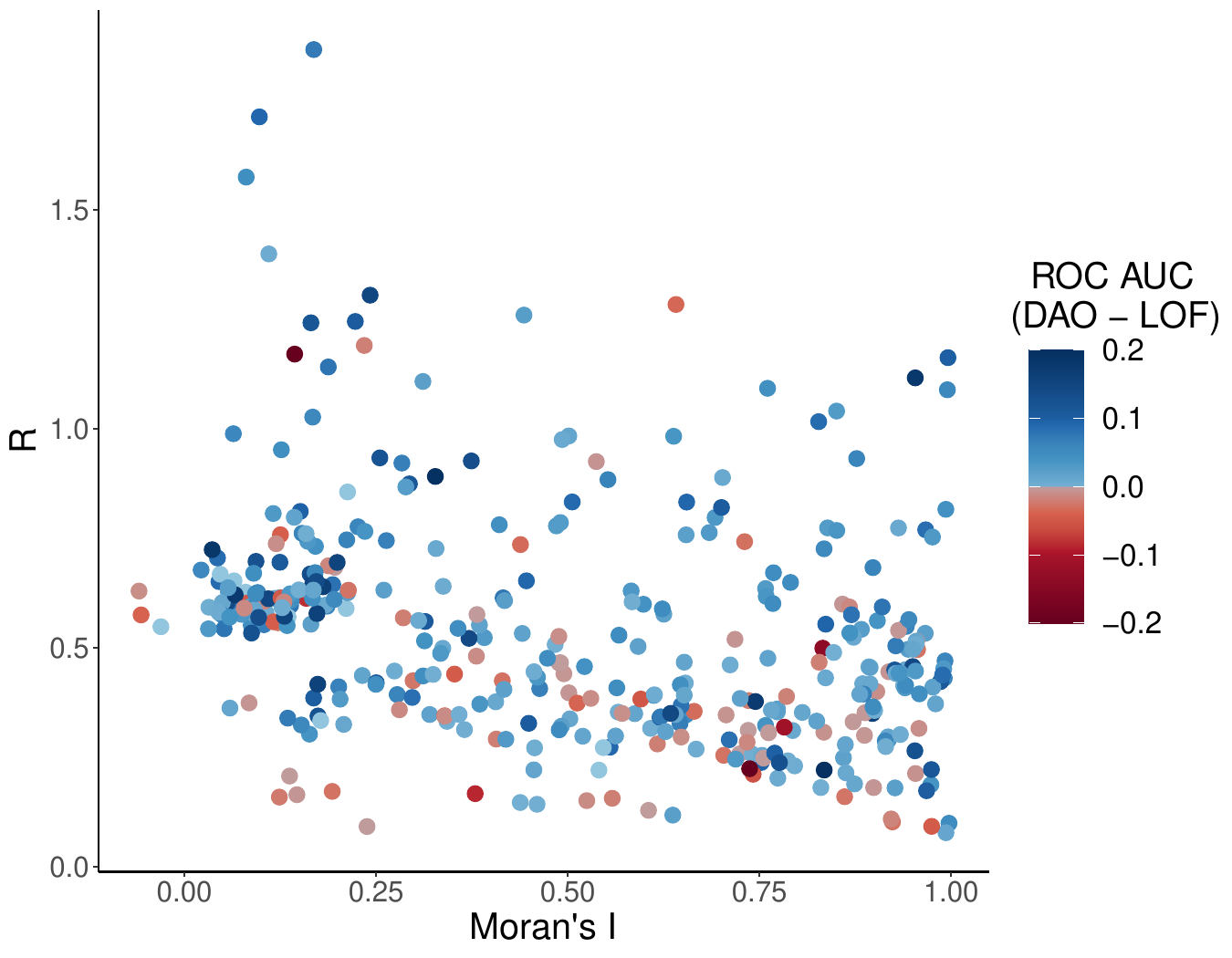}\label{fig:scatterplot:lof}}
\subfigure[$\DAO$ vs.~$\knn$]{\includegraphics[width=0.35\textwidth]{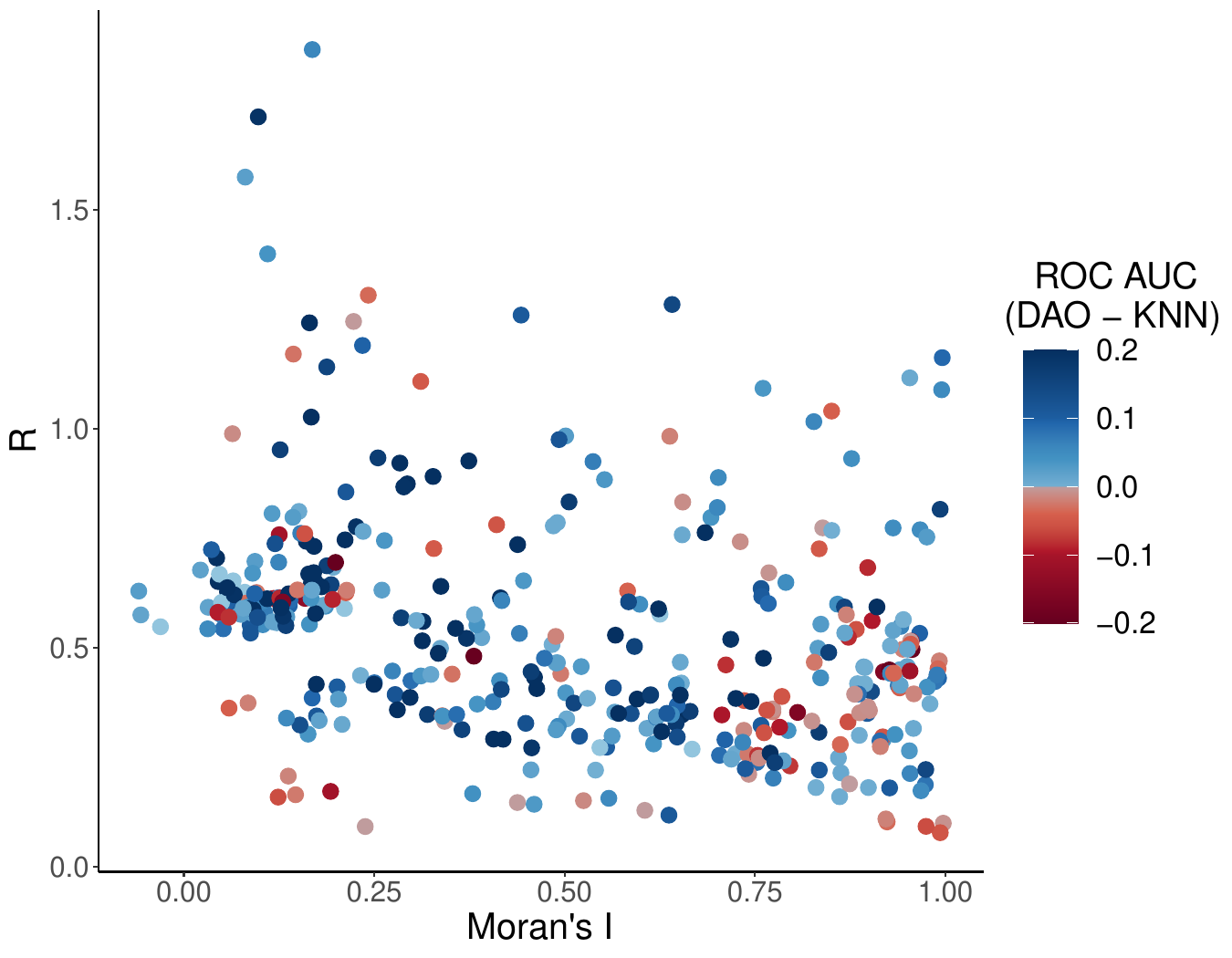}\label{fig:scatterplot:knn}}
\subfigure[$\DAO$ vs.~Oracle]{\includegraphics[width=0.35\textwidth]{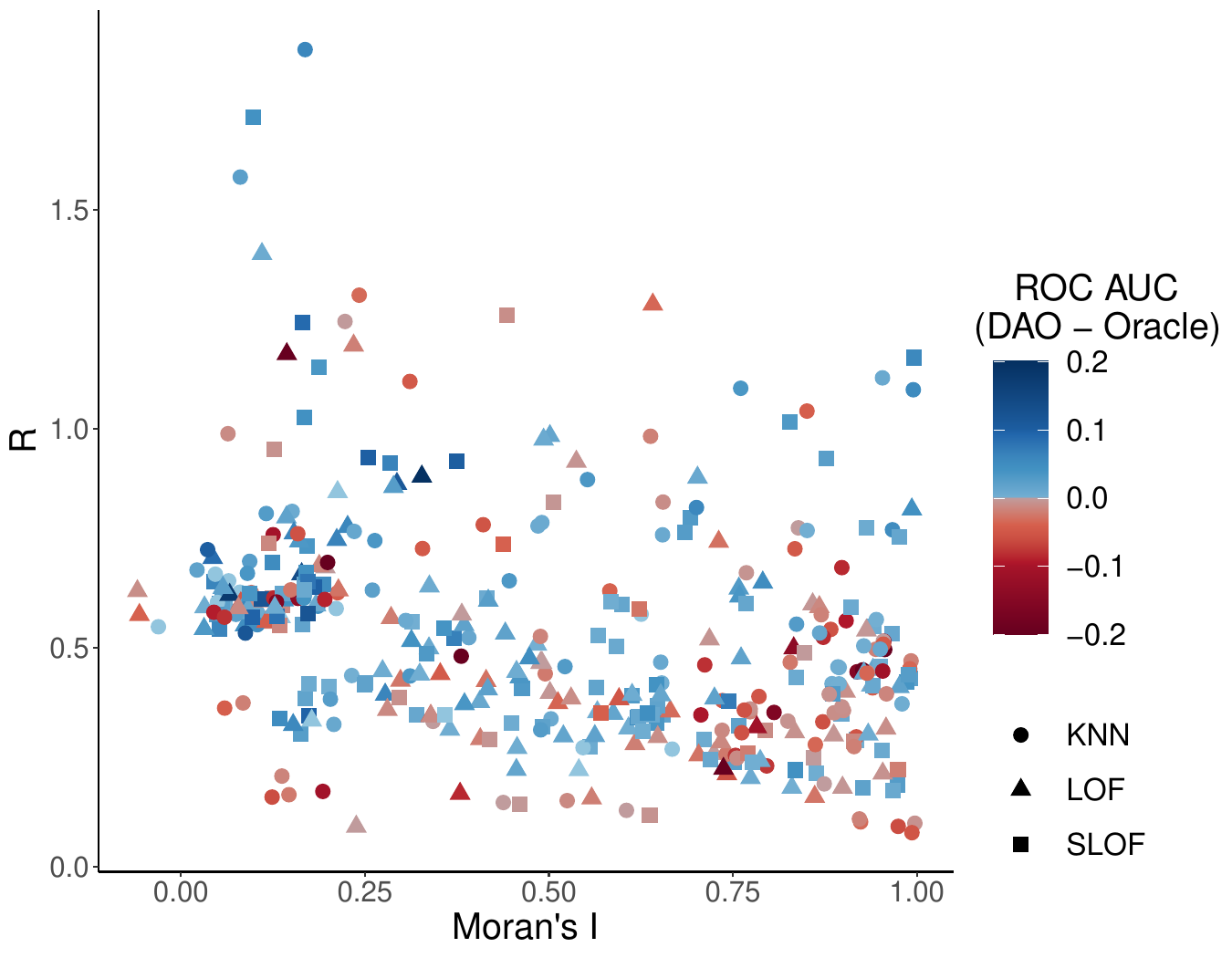}\label{fig:scatterplot:oracle}}
%\subfigure[$\DAO$ vs.~$\SLOF$]{\includegraphics[width=0.35\textwidth]{figs/slof.pdf}\label{fig:scatterplot:slof}}
%\subfigure[$\DAO$ vs.~$\LOF$]{\includegraphics[width=0.35\textwidth]{figs/lof.pdf}\label{fig:scatterplot:lof}}
%\subfigure[$\DAO$ vs.~$\knn$]{\includegraphics[width=0.35\textwidth]{figs/knn.pdf}\label{fig:scatterplot:knn}}
%\subfigure[$\DAO$ vs.~Oracle]{\includegraphics[width=0.35\textwidth]{figs/oracle.pdf}\label{fig:scatterplot:oracle}}
\caption{Differences in ROC AUC performance between $\DAO_{\MLE}$ and the dimensionality-unaware methods over 393 real datasets. Blue dots indicate datasets where $\DAO$ outperforms its competitor, whereas red dots indicate the opposite. The `Oracle' method indicates the best-performing competitor for each individual dataset. Color intensity is proportional to the ROC AUC difference. On the $x$- and $y$-axis we show the Moran's I autocorrelation and dispersion $R$ (mean absolute difference) of log-LID estimates, respectively.}
\label{fig:scatterplot}
\end{figure*}

\subsection{Comparative Evaluation on Real Datasets} \label{subsec:realdata}
% In the previous section, we compared the non-dimensionally aware methods against $\DAO$ in a large collection of synthetic datasets. We showed the effectiveness of $\DAO$ in handling datasets with large variability in the dimensionality within the dataset. In this section, we show that those same scenarios simulated with the synthetic datasets match real-world problems. In order to do that, we compare $\DAO$' performance against the same set of competitors in a large collection of 393 real datasets. 

\subsubsection{Dispersion of LID.}

We compared the performance of our dimensionality-aware outlier detection algorithm $\DAO_{\MLE}$ versus its dimensionality-unaware competitors over the collection of 393 real datasets described in Section~\ref{subsec:datasets} (see Table~\ref{tab:datasets}).
% As shown in the experiments on synthetic datasets, as the difference in the dimensionality within the dataset increases, the non-dimensionally aware methods tend to lose performance. However, for real datasets, it is not known a priori how the dimensionality varies within the dataset.
For real datasets, we cannot control how the local dimensionality varies within a dataset. In order to estimate this variability, we measure the dispersion of the LID estimates within each dataset using the mean absolute difference, as follows:
\begin{equation}
\label{eq:ratio}
%R = \frac{1}{n(n-1)}\sum_{i, j = 1}^{n} 
R = \frac{2}{n(n-1)}\sum_{1\leq i<j \leq n}
\left|
%\ln\left(\frac{\LID_i}{\LID_j}\right)
\ln\IDesti{i} - \ln\IDesti{j}
\right|
\, ,
\end{equation}
where $\IDesti{t}$ is the LID estimate for the $t$-th data point, as computed for $\DAO_{\MLE}$. 

This formulation in terms of log-LID values focuses on the difference in scale of the local intrinsic dimensionalities rather than the absolute differences themselves, which has the advantage that an incrementation of the dimensionality is treated as more significant for low dimensions (such as from 1 to 2) than it would be for high dimensions (such as from 24 to 25). We choose the $L_1$ formulation (mean absolute difference) instead of $L_2$ (variance) so as to avoid the dispersion score giving a disproportionately greater weight to the most extreme differences in log-LID.

% With this measure, we can capture the variance in terms of LID within the dataset. As with the synthetic datasets, we expect that by not taking the dimensionality into account, traditional methods should have a more severe performance loss on datasets with a large LID variation, i.e., datasets with higher values of $R$. 

\subsubsection{Autocorrelation of LID.}

Although dispersion captures the overall variability of the LID profiles within a dataset, it alone does not account for their full complexity.
% Another important aspect of the variation of the dimensionality within the dataset is how this variation occurs. For example, we expect the impact of a variation across the manifold, such as in the synthetic datasets, to be less severe than those where there are overlapping between manifolds of different dimensionality.
If variability occurs locally, possibly (but not necessarily) in a region of overlap between different manifolds, the LID profiles tend to be more complex than when variability occurs only across well-behaved, non-overlapping manifolds, or smoothly within a single, large manifold.  
To characterize this aspect of LID variation, we use the global Moran's I spatial autocorrelation~\cite{moran1950}, with log-LID as the base statistic of interest.
%(where the use of log has already been justified above w.r.t. the dispersion $R$).
% With this statistic, we can measure whether, overall, the observations are located in a region of uniform dimensionality or whether there is a high contrast between the dimensionality where the observations are located and the dimensionality where their neighbors are. 
Moran's I measures correlation of the base statistic among neighboring locations in space. In the context of datasets consisting of multidimensional feature vectors, one can define the Moran's I spatial neighborhoods to be the nearest neighbor sets of each data point, as induced by the distance measure of interest for the task at hand.
% As the global Moran's I spatial auto-correlation measure the overall correlation between the observations and their neighbors, a value of the neighborhood size must be provided. In this case, we vary the neighborhood size from 5 to 100 to maximize the absolute value of this statistic.
%Since this requires that a neighborhood size be provided,
For our experimentation,
we vary the neighborhood size from 5 to 100, and use the size that maximizes the % absolute value of the Moran's I statistic.
magnitude of the spatial autocorrelation (regardless of sign).

\subsubsection{Visualizing Outlier Detection Performance.}

In Figure~\ref{fig:scatterplot}, for each of the 393 real datasets, we visualize the differences in ROC~AUC performance between $\DAO_{\MLE}$ and the dimensionality-unaware outlier detection methods. Each colored dot in the scatterplot represents a single dataset, where blue indicates the outperformance of $\DAO$ relative to its competitor, and red indicates underperformance. The $y$-axis indicates the dispersion $R$ (mean absolute difference) of log-LID values computed at the data samples, and the $x$-axis shows their Moran's I autocorrelation~\cite{Ans95}.

In Figure~\ref{fig:scatterplot:slof}, we compare the performance of $\DAO$ against $\SLOF$. As discussed previously, $\SLOF$ can be seen as a dimensionally-unaware variant of $\DAO$, which implicitly assumes that the local intrinsic dimensionality of the test point always equals 1. From the clear predominance of blue dots, one can see that ignoring the intrinsic dimension leads to a performance loss in most cases. 
% Furthermore, the intensity of the colors shows that most of the dark blue dots are located in the cluster at the top-left corner, i.e., datasets with high LID variance and low correlation between the LID where the observations are located and the LID where their neighbors are. In fact,
When fitting linear regression to predict the difference in ROC~AUC between $\DAO$ and $\SLOF$ (Table~\ref{tb:lm}), %one can see that the slope of 0.051 and Pearson correlation $\rho = 0.34$ 
the dispersion $R$ and the gain of performance of $\DAO$ relative to $\SLOF$ are seen to have a direct relationship, as indicated by the positive regression slope and Pearson correlation.
%; that is, as the overall variability of LIDs within a dataset increases, $\DAO$ tends to perform better than $\SLOF$. The slope of -0.021 and $\rho = -0.17$
On the other hand, an inverse relationship exists between the Moran's I autocorrelation and the performance of $\DAO$ relative to $\SLOF$, as seen from the negative regression slope and Pearson correlation. 
In other words, as the correlation decreases between the intrinsic dimension at a query location and those of its neighbors' locations, $\DAO$ tends to outperform $\SLOF$ by a greater margin. 
%In a nutshell, $\DAO$ tends to perform relatively better than $\SLOF$ as the complexity of the LID profiles in a dataset increases.

% \begin{table}[tbp]
% \caption{Summary of the simple linear regression to predict the difference in ROC AUC between the dimensionally aware method $\DAO$ and the non-dimensionally aware methods on real datasets. The table also displays the Pearson correlation ($\rho$) between the response and explanatory variables. \label{tb:lm}}
% \begin{tabular}{@{}lcccccc@{}}
% \toprule
% \multicolumn{1}{c}{ROC AUC}     & \multicolumn{3}{c}{Hellinger Distance} & \multicolumn{3}{c}{Moran's I for Normalized Entropy Power} \\ \midrule
%              & Slope  & $p$-value  & $\rho$ & Slope     & $p$-value    & $\rho$    \\
% $\DAO$ - $\SLOF$   &  0.072  &  2e-6  &  0.23  &  -0.018  & 3e-3  &  -0.15 \\
% $\DAO$ - $\LOF$   &  0.076  &  5e-4  &  0.17  &  -0.02  & 2e-2  &  -0.11 \\
% $\DAO$ - $\knn$   &  0.081  & 6e-2  &  0.09  &  -0.069  & 1e-4  &  -0.19 \\
% %$\DAO$ - Oracle   &  0.034  &  0.0009  &  0.17  &  -0.035  & 2-e5  &  -0.21      \\ 
% \bottomrule
% \end{tabular}
% \end{table}

Overall, the results and major trends are similar when comparing $\DAO$ 
against $\LOF$ in Figure~\ref{fig:scatterplot:lof}, 
against $\knn$ in Figure~\ref{fig:scatterplot:knn}, 
and even (to a lesser extent) against an oracle that uses the best-performing competitor for each individual dataset in Figure~\ref{fig:scatterplot:oracle}. Their respective regression analyses, shown in Table~\ref{tb:lm}, lead essentially to the same conclusions as for $\SLOF$. 
It is worth noting that $\knn$ exhibits the largest (absolute) regression coefficients, which is consistent with the results from the synthetic experiments.
%, where this algorithm showed the most susceptibility to high variability of LID values within a dataset. 

Our experimentation reveals that dimensionality-aware outlier detection is of greatest advantage when the dataset has a complex LID profile, as indicated by a high dispersion ($R$ value) and/or a low autocorrelation (Moran's I value). The four scatterplots of Figure~\ref{fig:scatterplot} all show that the dimensionality-unaware methods are more competitive when there is less contrast in the LID values across the dataset --- that is, when the dispersion is low or the autocorrelation is high. Note that no outlier detection method can be expected to have perfect performance, as there are multiple factors that can favor any given model over any other~\cite{CamZimSanCametal16}. For example, among the outlier models studied in this paper, $\knn$ is known to be favored when the dataset contains many distance-based outliers.

We also summarize the overall results in a critical distance diagram (Figure~\ref{fig:cd}), which shows the average ranks of the outlier detection methods with respect to ROC~AUC, taken across the 393 real datasets. The width of the upper bar (CD) indicates the critical distance of the well-known Friedman-Nemenyi statistical test at significance level $\alpha$ = 1e-16. The large gap between $\DAO$ and $\LOF$ serves as quantitative evidence that $\DAO$ outperformed its dimensionality-unaware competitors by a significant margin.

\begin{figure}[tbp]\centering
  \includegraphics[width=0.3\textwidth]{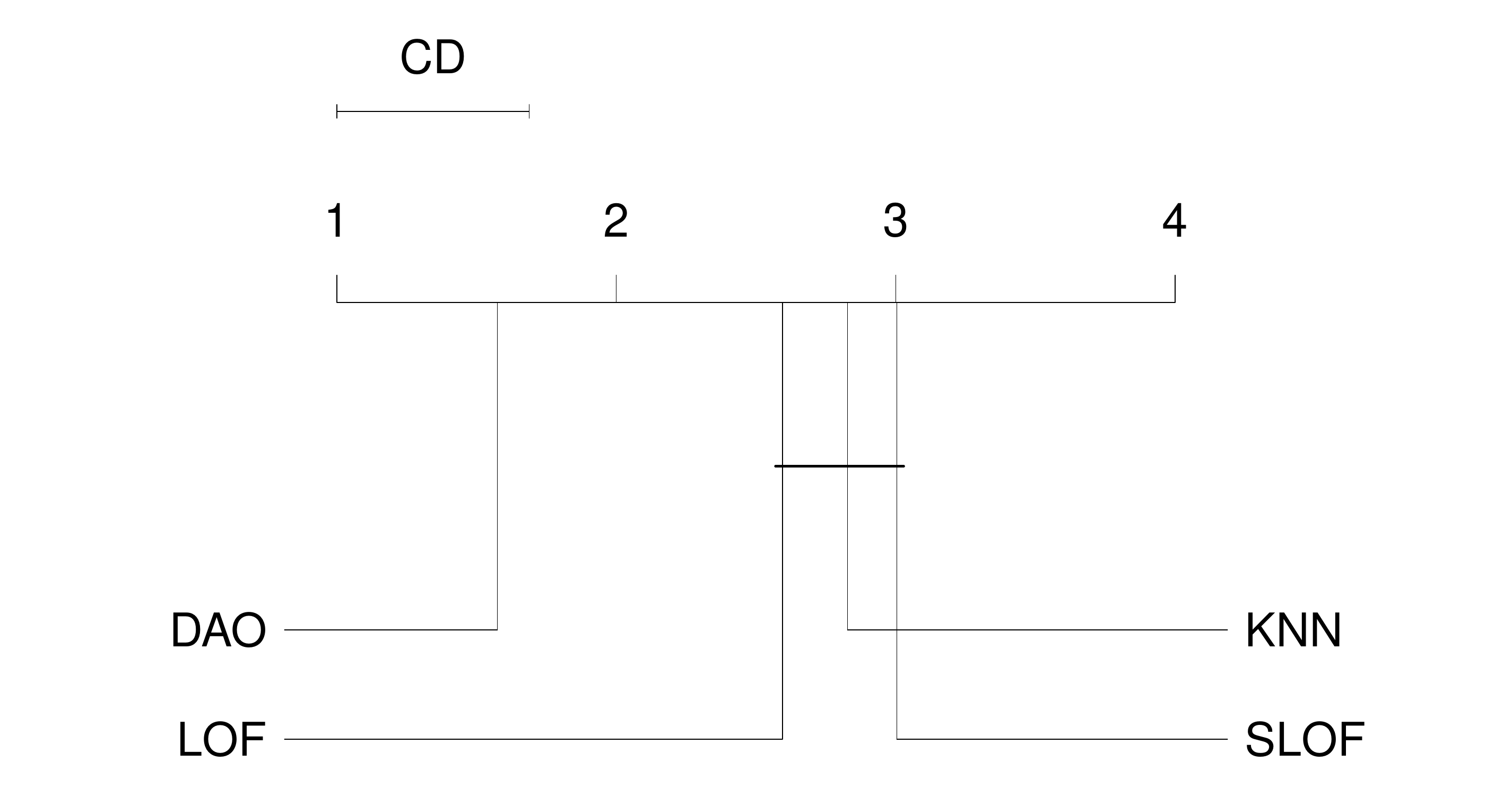}
  \caption{Critical difference diagram (significance level $\alpha$ = 1e-16) of average ranks of the methods on 393 real datasets: $\DAO_{\MLE}$ vs.\ baseline competitors.}
  \label{fig:cd}
\end{figure}

\subsection{Runtime Performance and Computational Complexity}

We evaluated the runtime performance of the outlier detection methods using our collection of 480 synthetic datasets with 1600 data points embedded in $\mathbb{R}^{32}$. 
%The execution times for all cases are dominated by the costs associated with the precomputation of $k$-nearest-neighbor sets for every point in the dataset. 
The three competing methods --- $\knn$, $\SLOF$, and $\LOF$ --- exhibited essentially the same execution time per run when averaged across all datasets and candidate neighborhood size values: 0.063s$\,\pm\,$0.008s, 0.064s$\,\pm\,$0.008s, and 0.065s$\,\pm\,$0.008s, respectively. 
% it took 0.103s$\,\pm\,$0.005s to run all 96 candidate values of neighborhood size $k$ per dataset.
% This best runtime performance, however, comes from the fact that it not only completely ignores the dimensionality of the space, but also ignores the observations in the vicinity of the point.
% $\SLOF$ also ignores the dimensionality of the ambient space, but it takes into account the observations in the vicinity of the point by comparing the density estimate of the point with those of its neighbors. This comparison results in an increase in the runtime.
% $\SLOF$ took, on average, 0.198s$\,\pm\,$0.005s to run all 96 values of $k$ per dataset.
%However, the average times per run for $\LOF$ and $\SLOF$ were 0.065s$\,\pm\,$0.008s and 0.064s$\,\pm\,$0.008s, respectively. Although all three employ the same routine to perform NN queries as $\knn$, the difference in runtime is explained by $\SLOF$'s computations of relative densities within neighbourhoods. These are also performed by $\LOF$, but in a slightly more sophisticated way, with the use of reachability distances for smoothness, which comes with some additional runtime.  
%$\LOF$ operates similarly to $\SLOF$. However, it uses the reachability distance to bring additional smoothness into the density estimates. The use of the reachability distance further increased the runtime performance of the method.
% On average, it took 0.286s$\,\pm\,$0.006s for $\LOF$ to run all 96 candidate values of $k$ per dataset.
%On average, $\LOF$ took 0.065s$\,\pm\,$0.008s per run across all datasets and candidate neighbourhood size values.
%
In contrast to the other methods, the $\DAO$
% criterion considers not only the observations in the vicinity of the point but also the dimensionality of the space. Making the method dimensionally aware, however, led to the worst runtime performance in our experiments.
criterion also required the estimation of LID values, which in our framework used neighborhoods of size up to 780 --- several times larger than the maximum neighborhood size (100) used by the competing methods. On average, the $\DAO$ execution time was 0.095s$\,\pm\,$0.036s per run across all datasets and candidate neighborhood size values. 

Asymptotically, the computational cost of all algorithms under consideration is dominated by that of determining neighborhood sets for all data points, which in the most straightforward (brute force) implementation requires $\Theta(n^2)$ distance calculations. With appropriate index structures, such as a K-d-Tree, subquadratic time may be achievable provided that the data dimensionality is not too high. 
%Notice that, in the case of $\DAO$, this analysis is not affected when considering any LID estimator that runs in sub-quadratic time w.r.t. the dataset size. This includes MLE, which runs in linear time once nearest neighbors have been precomputed. 
However, even with indexing support, the runtime complexity of $\DAO$ is essentially the same as that of $\knn$, $\SLOF$, and $\LOF$.

\section{Conclusion}\label{sec:conclusions}

% In contrast to supervised learning, where methods are supported by theoretical foundations~\cite{Vapnik95,Vapnik98}, methods in unsupervised learning are often supported only by empirical studies~\cite{CamZimSanCametal16,goldstein2016,emmott2016}.
%In this paper, we contributed to bridging the gap between theory and empirical studies in unsupervised outlier detection by presenting $\DAO$, a theoretical model for local outlier detection. 

In our derivation of $\DAO$ via the theoretical LID model, and its subsequent empirical validation, we have made the case for a dimensionality-aware treatment of the problem of outlier detection. The theoretical and empirical evidence presented in this paper establishes that conventional, dimensionality-unaware approaches are susceptible to the variations and correlations in intrinsic dimensionality observed in most real datasets,  and that the theory of local intrinsic dimensionality allows for a more principled treatment of outlierness.

% Using the tools available in the literature, we implemented our theoretical model $\DAO$ and compared it against the state-of-the-art for outlier detection. Our implementation of $\DAO$ outperformed the competitors with statistical significance on a large collection of over 390 real-world datasets. 
%We ran extensive experiments involving 480 synthetic and 393 real-world datasets, whereby we showed a statistically significant correlation between the decrease in ROC AUC performance of dimension-unaware methods and the increase in variability and in the complexity of local intrinsic dimensionality (LID) profiles within a dataset. By making our model dimension aware, its ROC AUC performance became significantly more robust in this regard.

Our analyses have shed some light on the fact that the quality of dimensionality-aware local outlier detection depends crucially on the properties of the estimator of LID. Estimators that learn by optimizing an objective function that favors inliers (such as LIDL), or those that perform smoothing (such as TLE), should be either avoided or used with caution. As our experiment results suggest, the use of an unsuitable estimator of LID may introduce errors that may outweight the benefits of dimensionality-aware techniques. It is still an open question as to which estimators of LIDs lead to the best outlier detection performance in practice. However, in our experimentation involving synthetic data, and the success of $\DAO_{\MLE}$ against top-performing nonparametric outlier methods ($\LOF$, $\SLOF$ and $\knn$) on hundreds of real datasets, we have seen the emergence of the MLE estimator of LID as a sensible option for practical outlier detection tasks.          

% Finally, it is important to remember that other implementations of $\DAO$ are possible depending on the choices to estimate the quantities from the finite sample.
% However, as discussed in our experiments (Section \ref{sec:exp:synth:estimators}), one should care not only about the overall quality of the estimates but also about the quality of the estimates for specific regions where we may have outliers. 
% Furthermore, the quality of $\DAO$ is directly connected to the quality of the LID estimates, and this estimation is not a simple task per se. Therefore, in addition to runtime overhead to compute the estimates, the inclusion of poor estimates may lead to additional estimation errors in the method, resulting in the superior performance of the non-dimensionally aware method (Figure \ref{fig:scatterplot:slof}).

\bibliographystyle{IEEEtranS}
\bibliography{abbrev,literature}

\end{document}